\newif\ifshowTodos
\title{Dual Perspectives on Non-Contrastive\\ Self-Supervised Learning}
\author{%
Jean Ponce\\
Ecole normale sup\'erieure/PSL \\
New York University\\
\texttt{jean.ponce@ens.fr} \\
\And
Basile Terver\\
Meta FAIR \\
INRIA Paris \\
\texttt{basileterv@meta.com}
\And
Martial Hebert\\
Carnegie-Mellon University\\
\texttt{martial.hebert@cs.cmu.edu}\\
\And
Michael Arbel\\
Univ. Grenoble Alpes, Inria\\
CNRS, Grenoble INP, LJK\\
\texttt{michael.arbel@inria.fr}}
\newtheorem{theorem}{Theorem}[section]
\newtheorem{proposition}[theorem]{Proposition}
\newtheorem{lemma}[theorem]{Lemma}
\newtheorem{corollary}[theorem]{Corollary}
\newtheorem{definition}[theorem]{Definition}
\begin{document}

\def\Q{{\mathcal Q}}
\def\q{{\mathfrak q}}
\def\RR{\mathbb{R}}
\def\CC{\mathbb{C}}
\def\EE{\mathbb{E}}
\def\VV{\mathbb{V}}
\def\KK{\mathbb{K}}
\def\NN{\mathbb{N}}
\def\PP{\mathbb{P}}
\def\AA{\mathbb{A}}
\def\LL{\mathbb{L}}
\def\SS{\mathbb{S}}
\def\TT{\mathbb{T}}
\def\barr{\bar{\mathbb{R}}}
\def\mat#1{{\mathcal{#1}}}
\def\vect#1{\bm{#1}}
\def\PPi{\mbox{\boldmath$\Pi$}}
\def\squig{\rightsquigarrow}
\def\eqdef{\buildrel \rm def \over =}
\def\mypar#1{\noindent{\bf #1}}
\def\comment#1{{}}
\def\qmatrix#1{\left[\begin{matrix}#1\end{matrix}\right]}
\def\st#1{{\tt #1}}

\newcommand{\ma}[1]{{\color{blue}{#1}}}

\maketitle

\begin{abstract}
The {\em stop gradient} and {\em exponential moving average} iterative procedures are commonly used in non-contrastive approaches to self-supervised learning to avoid representation collapse, with excellent performance in downstream applications in practice. This presentation investigates these procedures from the dual viewpoints of optimization and dynamical systems. We show that, in general, although they {\em do not} optimize the original objective, or {\em any} other smooth function, they {\em do} avoid collapse. Following~\citet{Tian21}, but without any of the extra assumptions used in their proofs, we then show using a dynamical system perspective that, in the linear case, minimizing the original objective function without the use of a stop gradient or exponential moving average {\em always} leads to collapse. Conversely, we characterize explicitly the equilibria of the dynamical systems associated with these two procedures in this linear setting as algebraic varieties in their parameter space, and show that they are, in general, {\em asymptotically stable}. Our theoretical findings are illustrated by empirical experiments with real and synthetic data.
\end{abstract}

\section{Introduction} {\em Self-supervised learning} (or {\em SSL})
is an approach to representation learning that exploits the internal consistency of training data {\em without} requiring expensive annotations.
It has proven to be an effective alternative to traditional supervised technology, for applications in natural language processing~\citep{Mikolov13,Vaswani17}, image
analysis~\citep{SimCLR,BYOL20,DINO,SimSiam21,CLIP,Barlow_Twins,bardes_vicreg} and video understanding~\citep{Bardes24}. 
Early SSL approaches,  e.g.,~\citep{Mikolov13,SimCLR,Moco20,CLIP}, were {\em contrastive}:  models are learned from training pairs that can be either {\em negative}, when one data point is representative of the target population and the other one is not, or {\em positive}, when both data points are representative of the target. The training consists in pushing negative pairs apart while pulling positive ones together.
{\em Non-contrastive} approaches to SSL  have emerged as a powerful alternative often outperforming contrastive ones empirically~\citep{BYOL20, SimSiam21, Bardes24, I-JEPA, bardes_vicreg, DINO}.
These techniques compute representations of different {\em views} of the same data and learn to predict one from another, thus avoiding the need for mining  negative data. They are, however, susceptible to {\em
  representational collapse} where a constant embedding is learned
~\citep{PathAMI}.

Preventing representation collapse in non-contrastive approaches has thus become a key focus in SSL, leading to two principal strategies: {\em feature decorrelation} and {\em enforcing asymmetry between the two views}.
The first strategy addresses representational collapse by explicitly enforcing decorrelation among the learned features.
For instance, \citet{bardes_vicreg} introduce a regularizer $\Omega$  designed to avoid collapse by keeping the variance of the codes of the two views of samples above a fixed threshold while encouraging the codes associated with the same sample to be similar.
More recently, \citet{sansone2025collapseproof}  proposed an auxiliary classification task with randomly assigned labels, providing theoretical guarantees against collapse.
Despite their conceptual simplicity, feature-decorrelation methods are often empirically outperformed by techniques that introduce  asymmetries between the views during training to avoid collapse.
Specifically, these rely on a teacher/student architecture, where the student computes a {\em source} view as the composition of encoder and predictor networks and aims to predict a {\em target} view obtained using the teacher network. The latter is either a frozen copy (through a {\em stop gradient} operation or {\em SG}) or a delayed version (through an {\em exponential moving average} or {\em EMA}) of the student encoder~\citep{DINOV2, Bardes24, I-JEPA, BYOL20, SimSiam21}. SG and EMA have shown strong empirical performance and remain standard components of state-of-the-art SSL models~\citep{I-JEPA, DINOV2}.

\mypar{Problem statement.} Despite the empirical success of SG and EMA, there is no obvious link between
these methods and the optimization of a well-defined objective
function.
This motivates gaining a theoretical
understanding of their behavior, including:
(a) Do SG and EMA  solve an optimization
problem and, if they do, which one?
(b) Do they converge and, when and if they do, are they guaranteed to avoid collapse?
(c) Seen as dynamical systems, are their stationary points, if any,
  stable, so there is no risk of drifting  from them
  to some trivial solution?
These are the  problems we address in the rest of this presentation,
from dual perspectives: an {\em optimization perspective} for (a) and (b), and a {\em dynamical system} one for (c), following the work of~\citep{Littwin24,Tian21,Wang21} in the {\em linear} case.

\mypar{Main contributions.}
\begin{itemize}
\item[(1)] We prove with Proposition~\ref{prop:first} that, in
  general,\footnote{
 Whenever we state
    that some property holds {\em in general},
    this means that, although they may not hold for {\em certain} data satisfying specific equations, they do hold, in practice, for {\em all} generic
data, in the standard mathematical sense, following the common notion of genericity in dynamical systems and algebraic geometry, e.g.,  \cite{hirsch2013differential}.} neither the SG algorithm nor
its EMA counterpart minimizes the  objective they are
  derived from and that, if they converge, they both avoid collapse (Figure~\ref{fig:landscape}).
\item[(2)] In the case where the loss is the squared Euclidean
  distance, we then prove with Proposition~\ref{prop:noptim} the conjecture given
  in~\citep{BYOL20} that the SG and EMA algorithms do not optimize {\em
    any} well defined function.
\item[(3)] We confirm (1) empirically  (Section~\ref{sec:realexp}) on an action
  classification task from video data, further finding that the SG and EMA algorithms do not appear to converge, although their downstream performance increases momentarily in training.
 \end{itemize}
 Following~\citet{Littwin24,Tian21,Wang21}, we then switch to a dynamical system
 perspective in the case where the encoder and predictor are both
 linear operators.
 \begin{itemize}
 \item[(4)] We characterize in Proposition~\ref{prop:Eqchar} and Corollary~\ref{corollary1} the equilibria of the dynamical systems associated with the SG and EMA
algorithms as a finite set of algebraic varieties.
 \item[(5)] We prove with Proposition~\ref{prop:dynamics} that these
   equilibria are, in general, {\em asymptotically
     stable}~\citep{Arnoldiff92}. In particular, when started close to
   them, the two procedures are guaranteed to converge there and
   stay there.
\item[(6)] We run some simulations in the simplified setting where the
  input space is scalar ($m=1$) and show that the two algorithms
  converge in general in these experiments, although possibly to trivial minima.
\end{itemize}

All proofs are relegated to the appendix for conciseness.
In the linear case, these proofs leverage the equations'  structure~\citep{matcook12} to avoid cumbersome tensor manipulations.
On a minor note, this notably allows us to rederive, for completeness, several results about the
dynamics of the SG and EMA algorithms (Lemmas~\ref{lemma:discdyn}
to~\ref{lemma:th1}) already known from~\citep{Tian21}, but without
assumptions from their original proofs, such as that the two views of the data be drawn from the same distribution conditioned on the data, or that the eigenvalues of certain PSD matrices be bounded away from zero, whose validity is difficult to guarantee in practice.

\subsection{Related work}

Several works have explored why SSL methods learn effective
representations, with feature-decorrelation methods being specifically
investigated
in~\citep{balestriero2022,whiteningloss2022,ziyin2023whatshapeslossSSL,jing2022understanding}. \citet{balestriero2022}
attempt to unify VICReg~\citep{bardes_vicreg}, SimCLR~\citep{SimCLR}
and Barlow Twins~\citep{Barlow_Twins} in a single framework using
global and local spectral embedding methods. \citet{whiteningloss2022}
investigate how whitening-based losses avoid collapse.
\citet{ziyin2023whatshapeslossSSL} develop an analytically tractable
theory of SSL loss landscapes for both contrastive and
feature-decorrelation methods, analyzing factors that affect SSL
robustness to data imbalance.

Experimental studies of asymmetry-based methods~\citep{Tao2021UniGrad,zhang2022howSimSiam,liu2022bridging} have
investigated their properties.
Notably,
UniGrad~\citep{Tao2021UniGrad} systematically compares contrastive and
non-contrastive SSL approaches, concluding that a momentum encoder is
key to improved performance.  \citet{zhang2022howSimSiam} use gradient
analysis of l2-normalized representations to study why SimSiam avoids
collapse.
\citet{liu2022bridging} experimentally show that
asymmetry-based methods like BYOL and SimSiam implicitly enforce
feature decorrelation, linking them to methods such as Barlow Twins
and VICReg.

Theoretical investigations of asymmetry-based methods have been proposed, notably by \citet{Tian21}, showing that, under
isotropic data and specific optimization trajectory assumptions, the
predictor and stop-gradient are essential to prevent collapse in
linear BYOL and SimSiam.
This study has inspired
algorithms where the predictor is an explicit function of the encoder,
with proven collapse-avoiding
properties~\citep{Wang21,jing2022understanding,halvagal2023implicit,self-predictivelearning2023}.
\citet{prediction_head_ncssl} investigate the encoder’s role in
avoiding collapse using a two-layer neural network, albeit under a
simplified data-generating process. Finally, \citet{Littwin24} study
the effect of depth on learned representations in deep linear models,
showing that Joint-Embedding Predictive Architectures (or JEPA) models prioritize ``influential'' features (features which are most informative in prediction), with
derivations relying on diagonal covariance matrices and orthogonal
initialization.

\section{Problem setting}
\begin{figure}[t]
  \vspace{-2em}
  \centerline{%
    \includegraphics[width=0.6\columnwidth]{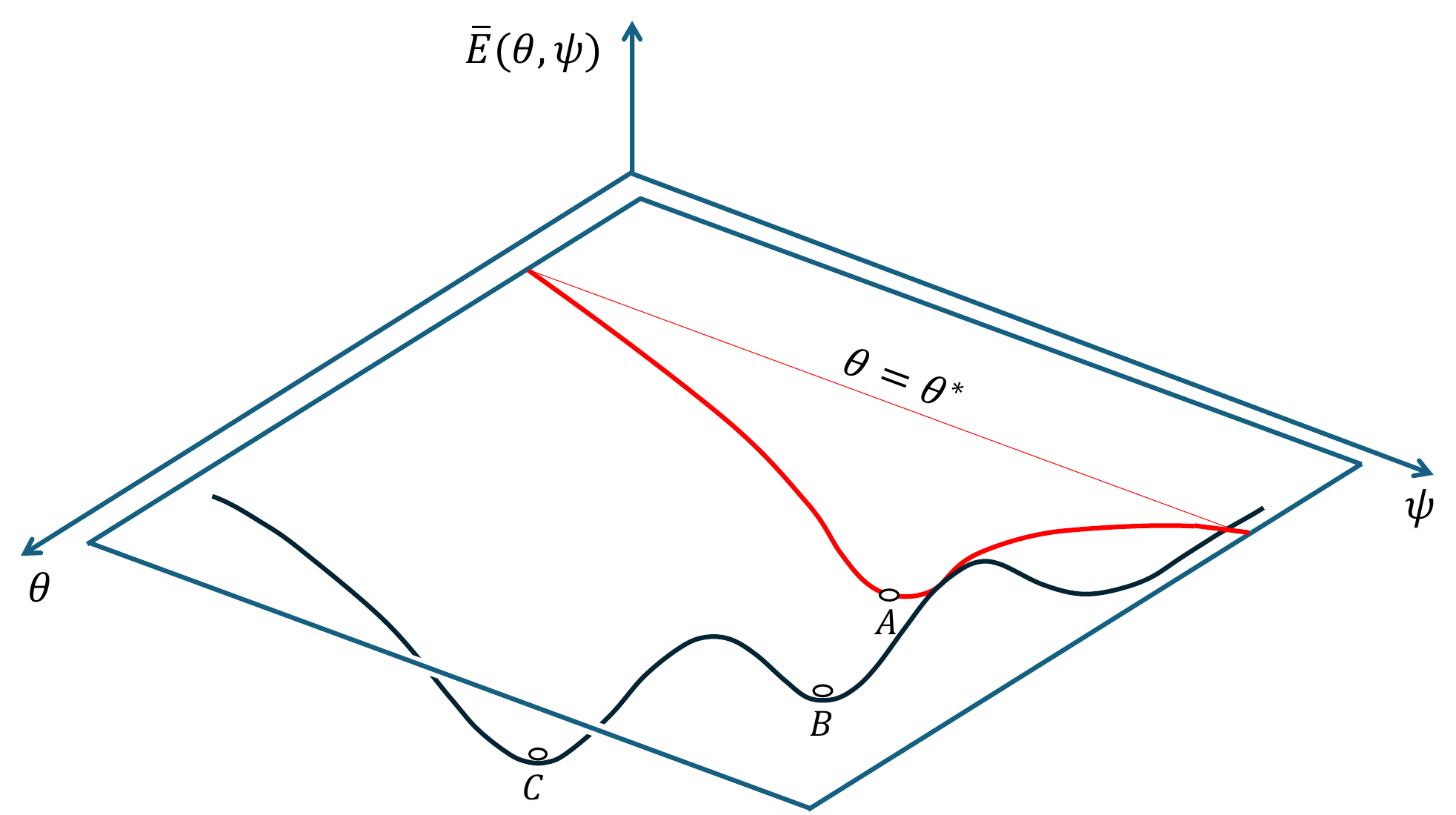}}
  \vspace{-2mm}
  \caption{\small
  A (toy) illustration of the optimization landscape for the objective function $\bar{E}(\theta,\psi)$ of Eq.~\eqref{eq:mainobj}. Here $C$ is the global minimum of $\bar{E}(\theta,\psi)$ (shown as negative instead of zero for readability)
associated with a collapse of the training process; $B$ is a nontrivial local minimum one may reach using an appropriate regularization to avoid collapse; and $A$ is a limit point of the stop gradient (SG) training procedure
associated with parameters $\theta^*$ and $\psi^*$ at convergence. In general, it is not a minimum of $\bar{E}$ and thus does not correspond to a collapse of the training process, but it is a minimum with respect to $\psi$ of $\bar{E}(\theta^*,\psi)$. See text for details.}
\vspace{-1em}
\label{fig:landscape}
\end{figure}
Given a parametric encoder $f_\theta:\RR^m\rightarrow\RR^n$ and a
parametric predictor $g_\psi:\RR^n\rightarrow\RR^n$ with parameters
$\theta$ in $\RR^p$ and $\psi$ in $\RR^q$, it is possible~\citep{PathAMI,bardes_vicreg} to learn
$\theta$ and $\psi$ from data embedded in $\RR^m$ without outside
supervision by minimizing with respect to these parameters the
objective function
\begin{equation}
  \bar{E}(\theta,\psi)=\EE_{x,y} E(\theta,\psi,x,y),
\,\,\text{where}\,\,
  E(\theta,\psi,x,y)=l[g_\psi\circ f_\theta(x),f_\theta(y)]+
  \Omega(\theta,\psi).
  \label{eq:mainobj}
\end{equation}
Here, $x$ and $y$ are different {\em views} of some data point (e.g.,
different crops of the same image), $\EE_{x,y} E$ is the mean of the
function $E$ over the (unknown) distribution of the views conditioned
on the corresponding data, approximated in
practice by a mean over a finite number of data
samples, $l:\RR^n\times \RR^n\rightarrow\RR^+$ is some loss and
$\Omega:\RR^p\times\RR^q\rightarrow\RR^+$ is some regularizer.  The
corresponding architecture is that of a Siamese
network~\citep{Siamese94} whose branches correspond to the
encoders of the two views compared, with shared parameters, while the predictor
sits on top of the first branch.
In this setting, $\bar{E}(\theta,\psi)$ can be minimized with
respect to these parameters by using, for example, stochastic gradient
descent. With proper learning rates, the training procedure will
converge to some critical point of $\bar{E}$ where both gradients are
zero (Figure~\ref{fig:landscape}). A difficulty, however, is how to
prevent it from {\em collapsing} by converging to the degenerate zero
global minimum corresponding to $f_\theta$ being a constant and
$g_\psi$ being the identity, or $f_\theta$ being zero and $g_\psi$
being a function such that $g_{\psi}(0)=0$.
To address this difficulty, {\em BYOL}~\citep{BYOL20} and {\em
  SimSiam}~\citep{SimSiam21} propose to use {\em
  exponential moving average} and {\em stop gradient} training
procedures, as defined in the rest of this section, as alternative minimization procedures. To
properly define these procedures, let us introduce an objective
function with an additional argument $\xi$ in $\RR^p$:
\begin{equation}
  \bar{F}(\theta,\psi,\xi)=\EE_{x,y} F(\theta,\psi,\xi,x,y),
\,\,\text{where}\,\,
  F(\theta,\psi,\xi,x,y)=l[g_\psi\circ
  f_\theta(x),f_\xi(y)]+
  \Omega(\theta,\psi),
\end{equation}
and consider instead the
 {\em exponential moving average} ({\em EMA})
procedure~\citep{BYOL20}. \\

\hrule
  \noindent{\bf EMA algorithm:} Initialize $\theta_0$, $\psi_0$ and
  $\xi_0$ to some values and $t$ to $1$, then repeat until convergence
  or you run out of patience:
\begin{itemize}
\item[(a)] $\theta_t\leftarrow \theta_{t-1}-\mu_t\nabla_\theta
  \bar{F}(\theta_{t-1},\psi_{t-1},\xi_{t-1})$;
\item[(b)]
  $\psi_t\leftarrow \psi_{t-1}-\nu_t\nabla_\psi
  \bar{F}(\theta_{t-1},\psi_{t-1},\xi_{t-1})$;
\item[(c)]  $\xi_t\leftarrow \alpha_t \xi_{t-1}+(1-\alpha_t)\theta_t$;
 \item[(d)] $t\leftarrow t+1$.
\end{itemize}
\noindent
\hrule

This is the procedure used to train BYOL in~\citep{BYOL20} and V-JEPA
in~\citep{Bardes24}\footnote{BYOL~\citep{BYOL20}
   assumes that what we call an encoder is  a
  proper encoder followed by a projection operator and that the loss
  acts on normalized versions of its inputs.  Both BYOL and SimSiam~\citep{SimSiam21} make the loss symmetric by having each view predict the other. This is subsumed by
  the framework presented here and does not change the conclusions of our
analysis}. The corresponding architecture is no longer a true
Siamese network because the encoders in its two branches have different
parameters,  so $f_\theta$ and $f_\xi$ are sometimes
respectively called the {\em online} (or {\em student}) and {\em target}
(or {\em teacher}) networks.
An alternative is to consider  the  {\em stop
  gradient}  procedure~\citep{SimSiam21} used  to
train SimSiam, which is  a true Siamese architecture with
identical encoders in its two branches but uses
$\nabla_\theta\bar{F}$ as a proxy for $\nabla_\theta \bar{E}$ when
updating $\theta$. \\

\noindent
\begin{minipage}{\textwidth}
\hrule
\vspace{0.5\baselineskip}
{\bf SG algorithm:}  Initialize $\theta_0$ and $\psi_0$ to
  some values and $t$ to $1$, then repeat until convergence
  or you run out of patience:
\begin{itemize}
\item[(a)] $\theta_t\leftarrow \theta_{t-1}-\mu_t\nabla_\theta
  \bar{F}(\theta_{t-1},\psi_{t-1},\theta_{t-1})$;
\item[(b)]
  $\psi_t\leftarrow \psi_{t-1}-\nu_t\nabla_\psi
  \bar{F}(\theta_{t-1},\psi_{t-1},\theta_{t-1})$;
 \item[(c)] $t\leftarrow t+1$.
\end{itemize}
\hrule
\end{minipage}

Note that $\bar{E}(\theta,\psi)=\bar{F}(\theta,\psi,\theta)$ for any
values of $\theta$ and $\psi$, but it is a priori unclear whether any
limit point of the SG procedure is related to the critical
points of $\bar{E}$.
Note also that the EMA algorithm reduces to the SG one when $\alpha_t=0$ and $\xi_0=\theta_0$. Indeed, \citet{SimSiam21} refer to SimSiam as ``BYOL {\em without} the momentum encoder'', but in practice
$\alpha_t$ is taken very close to $1$, e.g., $\alpha_t$ varies between $0.996$ and $1$ in BYOL~\citep{BYOL20}.

\section{An optimization perspective} {\em We  assume from now on
  for simplicity that $\alpha_t$ is  a constant with $\alpha_t\neq 1$ or converges to such a constant as $t$ tends to infinity.}
Under this assumption, if and when the EMA and SG procedures converge, we
have by continuity $\theta=\xi$ at a limit point because of step (c),
with the same gradient values in $\theta$ and $\psi$ as in SG. The dynamics may be different, and lead to
different limit points, but these will obey the {\em same} zero
gradient conditions at the equilibria of the
corresponding dynamical systems.

\comment{We will thus focus most of our
discussion in the rest of this presentation on the SG algorithm, the conclusions reached about its equilibria being also valid for its EMA counterpart.}

\subsection{EMA and SG objectives\label{sec:obj}}
The {\em motivation} for the EMA algorithm in BYOL \citep[Sec. 3.1,
Eq. (2)]{BYOL20} is to minimize over $\theta$, $\psi$ and $\xi$ the
mean over pairs of views $x$ and $y$ of the data the
squared Euclidean distance between the {\em prediction}
$g_\psi\circ f_\theta (x)$ obtained from the representation of $x$ and
the {\em target} encoding $f_\xi(y)$ of $y$, where $\xi$ is the
 moving average defined recursively
by step (c) of  EMA.  Note that, although
$\xi_t$ is  well defined at each $t$, an explicit
definition of $\xi$ is missing, except in the limit case if and when
 EMA converges. It is thus {\em
  a priori} unclear whether its iterations minimize a well-defined function.
Similarly, the motivation for the SG training procedure in
~\citep[Sec. 3, Eq.~(1)]{SimSiam21} is to minimize the mean
squared Euclidean distance between the prediction
$g_\psi\circ f_\theta (x)$ obtained from $x$ and the target $f_\theta(y)$, using  $\nabla_\theta \bar{F}$ as a
{\em proxy} for the true gradient $\nabla_\theta\bar{E}$ of the
corresponding objective function. It is therefore again {\em a
  priori} unclear whether it  minimizes a well-defined function.

\subsection{SG and EMA algorithms vs optimization problems}
It is important for fairness to clearly state that none of the papers
we are aware of that use the SG or EMA procedure claims to actually
minimize such functions in practice. In fact, the BYOL authors
 write~\citep{BYOL20}: ``More generally, we
hypothesize that there is no loss $L_{\theta,\xi}$ such that BYOL's
dynamics is a gradient descent on $L$ jointly over $\theta,\xi$.'' One
of our main contributions is actually to prove this conjecture (see
Proposition~\ref{prop:noptim} below).  Let us first state a simple
but important result (see the appendix for its proof).

\begin{proposition}
  The SG and EMA algorithms do not, in general, minimize the
  original objective function $\bar{E}$ of
  Eq.~(\ref{eq:mainobj}). If and when they converge, the corresponding
  solution is, in general, not a degenerate one corresponding to a zero global
  minimum of that function.
  \label{prop:first}
\end{proposition}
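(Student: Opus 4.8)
The plan is to isolate the single term that separates the stop-gradient update from genuine gradient descent on $\bar{E}$, and to read off both claims from it. Since $\bar{E}(\theta,\psi)=\bar{F}(\theta,\psi,\theta)$, differentiating through both occurrences of $\theta$ gives
\[
\nabla_\theta\bar{E}(\theta,\psi)=\nabla_\theta\bar{F}(\theta,\psi,\theta)+\nabla_\xi\bar{F}(\theta,\psi,\theta),
\]
while $\nabla_\psi\bar{E}(\theta,\psi)=\nabla_\psi\bar{F}(\theta,\psi,\theta)$ because $\psi$ enters only the prediction branch. Thus SG reproduces descent on $\bar{E}$ exactly in $\psi$ but discards the ``stop-gradient'' term $\nabla_\xi\bar{F}(\theta,\psi,\theta)$ in $\theta$. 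As already noted in the excerpt, under the standing assumption on $\alpha_t$ every EMA limit point satisfies $\xi=\theta$ through step (c), so EMA and SG obey the same fixed-point equations and a single argument covers both.

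First I would prove that neither procedure minimizes $\bar{E}$. At any limit point $(\theta^*,\psi^*)$ one has $\nabla_\theta\bar{F}(\theta^*,\psi^*,\theta^*)=\nabla_\psi\bar{F}(\theta^*,\psi^*,\theta^*)=0$, so the identity collapses to $\nabla_\theta\bar{E}(\theta^*,\psi^*)=\nabla_\xi\bar{F}(\theta^*,\psi^*,\theta^*)$. Hence the limit is a critical point of $\bar{E}$ precisely when this residual term vanishes. A short computation for the squared Euclidean loss expresses $\nabla_\xi\bar{F}$ as the data average of the prediction--target residual $g_{\psi^*}\!\circ f_{\theta^*}(x)-f_{\theta^*}(y)$ contracted with the Jacobian of the target encoder; this is not entailed by the two fixed-point equations and so is nonzero for generic data, proving that generic limit points fail to be critical points of $\bar{E}$ and, a fortiori, that the procedures do not minimize it.

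Next I would rule out collapse. At either degenerate configuration --- $f_{\theta^*}$ constant with $g_{\psi^*}$ the identity, or $f_{\theta^*}\equiv 0$ with $g_{\psi^*}(0)=0$ --- the prediction equals the target for every $(x,y)$, so the residual vanishes identically and, the loss being minimized at equality, $\nabla_\xi\bar{F}(\theta^*,\psi^*,\theta^*)=0$. The identity then gives $\nabla_\theta\bar{E}=0$, confirming that the collapse configurations are genuine critical points (in fact zero global minima) of $\bar{E}$. Combining this with the previous step closes the argument: a generic limit point has $\nabla_\xi\bar{F}\neq 0$, whereas every collapse configuration has $\nabla_\xi\bar{F}=0$, so the two cannot coincide.

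The main obstacle is making ``generic'' precise in the non-vanishing step, i.e.\ showing that the $p$ equations $\nabla_\xi\bar{F}(\theta^*,\psi^*,\theta^*)=0$ are not forced by the $p+q$ fixed-point equations $\nabla_\theta\bar{F}=\nabla_\psi\bar{F}=0$. I would settle this either by a dimension count --- the extra constraints over-determine the system for generic view distributions --- or, more concretely, by observing that $\nabla_\xi\bar{F}$ is real-analytic in the data and exhibiting one configuration (for instance in the linear model analysed later) where it is nonzero, so that its zero set has empty interior. The remainder is just the chain rule above together with the elementary remark that the discarded stop-gradient term is exactly the prediction--target residual seen through the target branch.
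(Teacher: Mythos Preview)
Your proof is correct and follows essentially the paper's route: both use the chain-rule decomposition $\nabla_\theta\bar{E}=\nabla_\theta\bar{F}+\nabla_\xi\bar{F}$ (the paper writes the residual explicitly as $\EE_{x,y}[J_\theta f(\theta,y)^T\nabla_v l]$), argue that this stop-gradient residual is data-dependent and generically nonzero at a limit point, and conclude that such a point is not a critical point---hence not the degenerate zero global minimum---of $\bar{E}$. Your added verification that collapse configurations force $\nabla_\xi\bar{F}=0$, and your discussion of how to make ``generic'' precise via a dimension count or real-analyticity, are mild elaborations beyond the paper's terser argument, but the core idea is identical.
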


A harder question to answer is whether these algorithms optimize {\em any}  objective function.
{\em Let us assume from now on} for simplicity that $l$ is the (half) squared Euclidean
distance and $\Omega(\theta,\psi)=\lambda(\Vert\theta\Vert^2+\Vert\psi\Vert^2)/2$.
With this choice, we have $\nabla_u l(u,v)=u-v$, $\nabla_\theta\Omega=\theta$,
  $\nabla_\psi\Omega=\psi$ and
\begin{equation}
  \left\{
\begin{array}{l}
           \nabla_\theta F (\theta,\psi,\theta,x,y) =J_\theta u(\theta,\psi,x)^T[u(\theta,\psi,x)-v(\theta,y)]+\lambda\theta, \\
\nabla_\psi F (\theta,\psi,\theta,x,y)=J_\psi
  u(\theta,\psi,x)^T[u(\theta,\psi,x)-v(\theta,y)]+\lambda\psi,
\end{array}\right.
\label{eq:PQ}
\end{equation}
where $u(\theta,\psi,x)= g_{\psi}\circ f_{\theta}(x)$ and $v(\theta,y) = f_{\xi}(y)$.
We wish to understand whether the vector fields
$\EE_{x,y}[\nabla_{\theta}F(\theta,\psi,x,y)]$ and
$\EE_{x,y}[\nabla_{\psi}F(\theta,\psi,x,y)]$ are the
gradients of some well defined
scalar function which the SG
algorithm (and  its EMA counterpart) would presumably
minimize. The answer is negative:
\begin{proposition}
Under the (mild) assumption that $f_{\theta}$ and $g_{\psi}$ are smooth neural networks whose last layer is linear and that they are not identically $0$, i.e. there exists $\theta_0, \psi_0, x_0$ and $z_0$ so that $f_{\theta_0}(x_0)\neq 0$ and $g_{\psi_0}(z_0)\neq 0$, then the vector fields $\EE_{x,y}[\nabla_{\theta}F(\theta,\psi,x,y)]$ and
$\EE_{x,y}[\nabla_{\psi}F(\theta,\psi,x,y)]$ are not, in general, the gradient fields of any smooth function.
  \label{prop:noptim}
\end{proposition}

\begin{proof}[Proof sketch]
The full proof, given in the appendix, follows from Eq.~(\ref{eq:PQ}) and
Schwarz's integrability theorem, according to which, a {\em necessary}
  condition for these vector fields to be the gradient field of a
  smooth scalar function is that their second-order cross derivatives
  be the transposes of each other. By expressing these cross derivatives we find that their difference (taking into account the transposition operation) does not vanish in a generic sense.
  To establish genericity, we show that there exists arbitrarily small perturbations to the data distribution for which the corresponding cross derivative difference cannot be identically zero.
\end{proof}

\subsection{Experiments with real data\label{sec:realexp}}
We investigate in this section
three fundamental questions about the SG and EMA algorithms: (1)
Although they do not, in theory, minimize the original objective
$\bar{E}(\theta,\psi)$, do they minimize it in practice? (2) Do they
converge, and in particular do $\theta_t-\theta_{t-1}$ and
$\psi_t-\psi_{t-1}$ both tend toward zero as the number of training
steps increases? (3) Does the classification accuracy increase with
training time? We address these three questions in a realistic
setting with experiments on a video classification task using the
code of~\cite{Bardes24}, on the
Kinetics710 and SSv2 benchmarks~\citep{SSv2,K700-2020}.

\mypar{Experimental setting.}
To provide a realistic setting, we address the
challenging problem of self-supervised learning for action
classification in videos, a task for which
VICReg~\citep{bardes_vicreg} and  regularization-based methods
such as SimCLR~\citep{SimCLR} and Barlow Twins~\citep{Barlow_Twins}
have not proven successful {\em so far}. Indeed, the state-of-the-art
V-JEPA model of~\citet{Bardes24} uses the EMA algorithm for
training, and we use its code, kindly provided by its authors, in our
experiments, with an $\ell_1$ loss for $l$, a ViT-S encoder and a
ViT-T predictor~\citep{VITs}.  We use the unions of the Kinetics710
and SSv2 training datasets~\citep{SSv2,K700-2020} for learning the
representation.  We also learn in a supervised manner an attentive
pooling classifier as in~\citep{Bardes24}, with
the train splits of each dataset. We report the top-1 accuracy on the classification task for
each dataset on its validation splits.

We slightly modify the training setup compared to \citep{Bardes24} to make it simpler and faster.  We run the SG and EMA algorithms on
both datasets in our experiments, with 1000 training epochs (300 000 iterations), instead of 300 epochs in V-JEPA. We also
restrict the train set to Kinetics710 and SSv2, discarding
Howto100M~\cite{HowTo100M}. We use a ViT-Base encoder instead of a
ViT-L encoder and reduce the batch size to 1024. Instead of the cosine
annealing with warmup schedulers, we fix the learning rate to $0.0001$
and the weight decay  to $0.1$. Following~\citep{Bardes24},
the variable $\alpha_t$ increases from $0.998$ to $0.9996$ across
iterations. As expected, with these simplifications, the
performance of the trained models on the downstream K400 and on SSv2
video classification tasks (Figure~\ref{fig:min}, bottom) is lower
than the accuracies of 72.9 and 67.4 reported in \citep{Bardes24} for
the V-JEPA ViT-L model trained on K710 and SSv2 for a total of 900K
samples seen during training. Yet we believe that this simple setting
is sufficient to run realistic experiments and give preliminary yet
meaningful answers to the questions addressed in this section.

\begin{figure}
   \centering
    \begin{subfigure}[c]{0.325\textwidth}
\includegraphics[width=\linewidth]{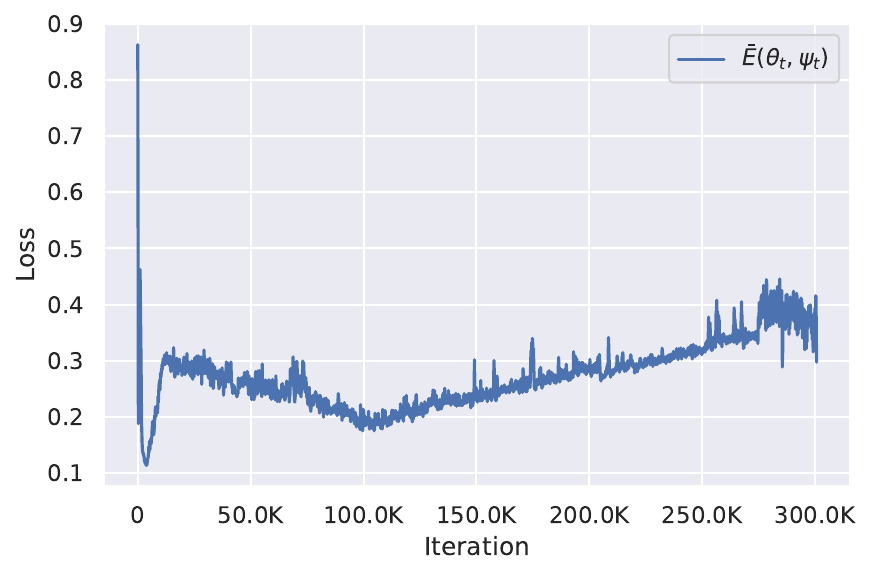}
    \end{subfigure}
        \begin{subfigure}[c]{0.325\textwidth}
\includegraphics[width=\linewidth]{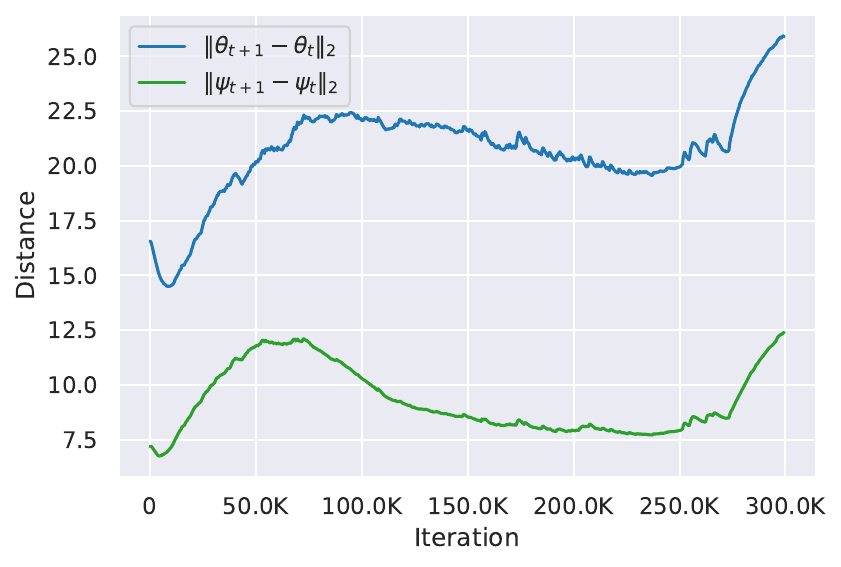}
    \end{subfigure}
        \begin{subfigure}[c]{0.325\textwidth}
    \includegraphics[width=\linewidth]{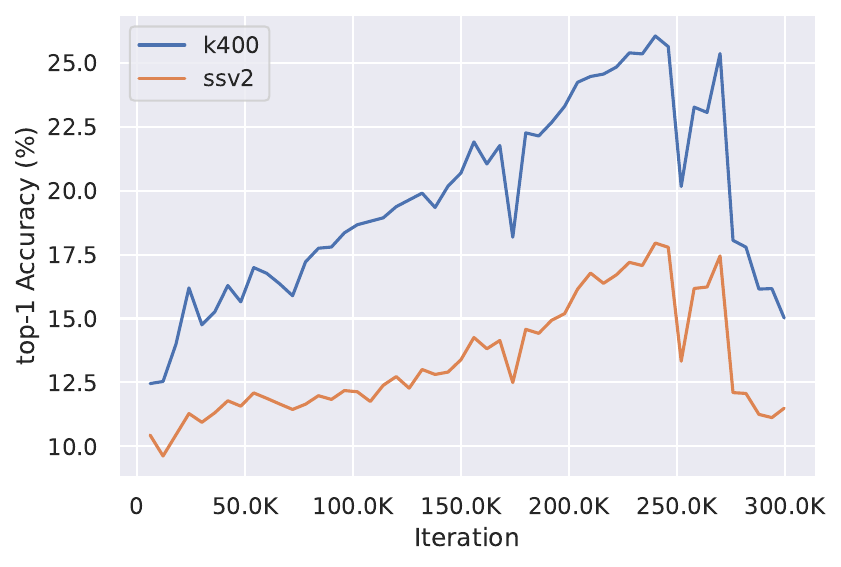}
    \end{subfigure}
    \begin{subfigure}[c]{0.325\textwidth}
    \includegraphics[width=\linewidth]{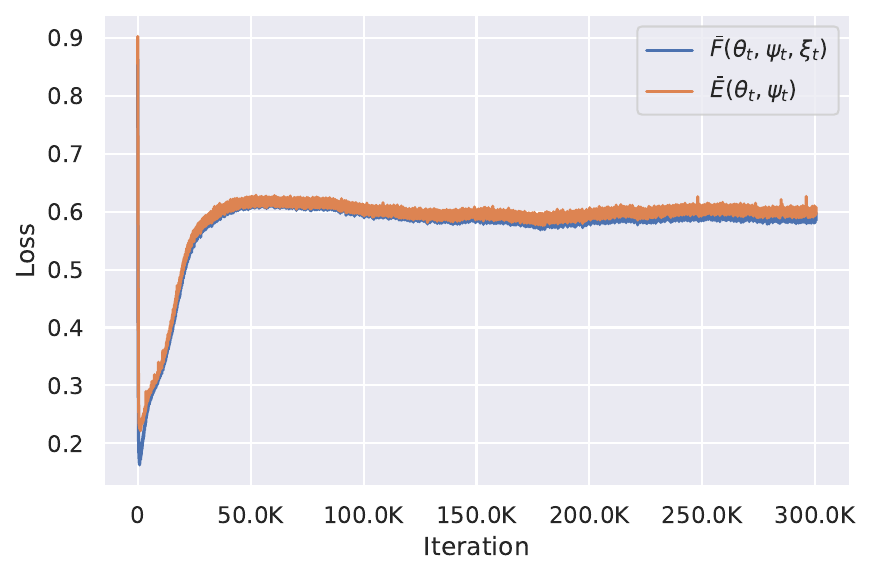}
    \end{subfigure}
    \begin{subfigure}[c]{0.325\textwidth}
    \includegraphics[width=\linewidth]{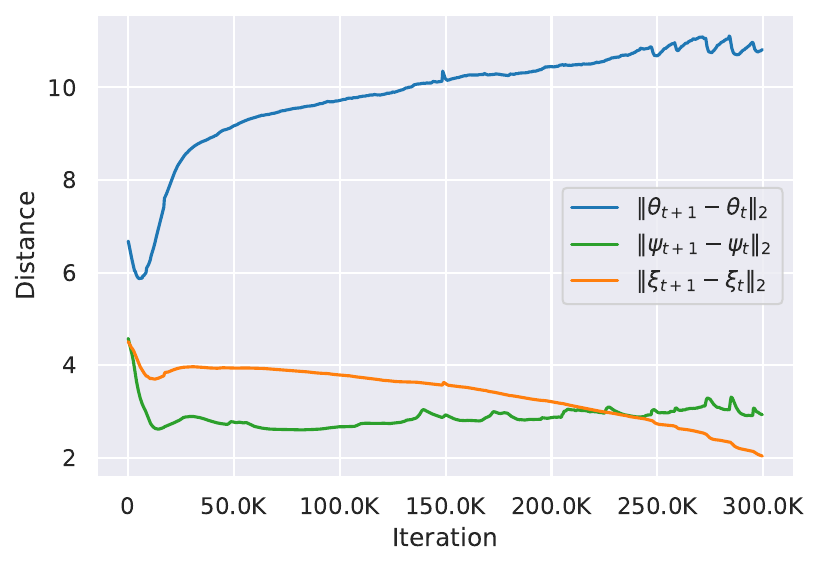}
    \end{subfigure}
    \begin{subfigure}[c]{0.325\textwidth}
    \includegraphics[width=\linewidth]{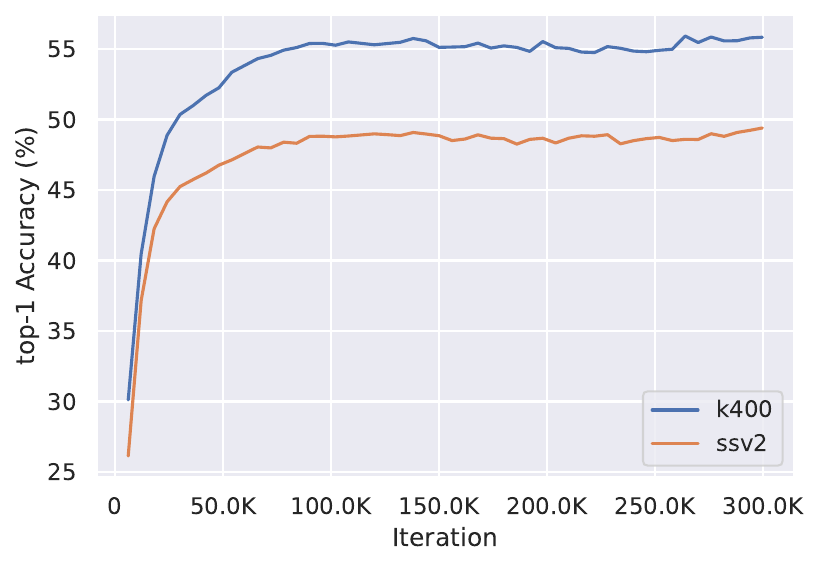}
    \end{subfigure}
    \vspace{-3mm}
    \caption{\small Evolution over 300,000 iterations of (left) the
      $\bar{E}$ objective (plus $\bar{F}$ for EMA) without the squared
      Euclidean norm regularizer, (middle) the norms of the parameter
      increments $\theta_t-\theta_{t-1}$, $\psi_t-\psi_{t-1}$ and
      $\xi_t-\xi_{t-1}$, and (right) the classification accuracy on
      the K400 and SSv2 benchmarks for SG (top) and EMA (bottom). All
      curves here are smoothed using a moving average, and the squared
      Euclidean norm regularizer is ignored to emphasize the main
      loss.}
  \vspace{-2em}
\label{fig:min}
\end{figure}

\mypar{Results and conclusions.}
Figure~\ref{fig:min} (top, left) shows the evolution of
$\bar{E}(\theta_t,\psi_t)$ for the SG procedure as a function
of the number of iterations. As expected, the algorithm does not
minimize the function, its smallest value being reached
early in the iterations. Figure~\ref{fig:min} (bottom, left) shows the
evolution of $\bar{F}(\theta_t,\psi_t,\xi_t)$ and
$\bar{E}(\theta_t,\psi_t)$ for the EMA procedure, and similar
conclusions can be reached.
As shown in Figure~\ref{fig:min} (top, middle),  $\theta$ and
$\psi$  {\em do not} converge in our experiments either
since the norms of $\theta_t-\theta_{t-1}$ and $\psi_t-\psi_{t-1}$ do
not tend to zero. As shown in Figure~\ref{fig:min} (bottom, middle), a similar
conclusion can be reached for $\theta$, $\psi$ and $\xi$
 and the EMA algorithm.
Finally, classification accuracy {\em does} increase initially  for both the SG and EMA algorithms before decreasing late
in the process for SG, and reaching a plateau for EMA
(Figure~\ref{fig:min}, right), confirming the commonly accepted fact
that they learn something, although it is still unclear exactly what
they learn. In this {\em particular} setting, EMA
 gives better classification results than SG. {\em General} conclusions about other
downstream tasks should not be drawn from this of course.

\section{A dynamical system perspective in the linear case}
Let us now consider, following~\citep{Littwin24,Tian21,Wang21}, the
linear case where $f_\theta(x)=Ax$, $f_\xi(y)=Cy$, and $g_\psi(z)=Bz$,
where $x\in\RR^m$, $A$ and $C$ are $n\times m$ matrices, with $n>m$,
$z \in \RR^n$, $B$ is an $n\times n$ matrix, and the vectors
$\theta$, $\xi$ and $\psi$ store row after row the coefficients of $A$,
$C$ and $B$.  We will from now on write $[xx^T]$ for $\EE_{x}\left[xx^T\right]$
and $[yx^T]$ for $\EE_{x,y}\left[yx^T\right]$. Recall that we consider, here, the (half) squared Euclidean distance for $l$ and $\Omega(\theta,\psi)=\lambda(\Vert \theta\Vert^2 + \Vert \psi \Vert^2)/2$.

\subsection{The SG and EMA algorithms as dynamical
  systems\label{sec:discr}}
An algorithm such as the SG and EMA procedures can also be viewed as a discrete
dynamical system. In this case, the dynamics are driven by the
gradients of the function $F$ with respect to the parameters being
optimized, {\em i.e.}, the matrices $A$, $B$ and $C$. Note: {\em We
  assume from now on that $\alpha_t$ is a constant $\alpha\neq 1$}.
For completeness, we rederive in the rest of this section several
results about the dynamics of the SG and EMA algorithms
(Lemmas~\ref{lemma:discdyn} to~\ref{lemma:th1}) already known
from~\citep{Tian21}.  As noted earlier, however, our proofs, given in
appendix, do not rely on the assumptions required by the original
ones, for example that the two views of the data be drawn from the
same distribution conditioned on the data, or that the eigenvalues of
certain PSD matrices be bounded away from zero, whose validity is
difficult to guarantee in practice.

\begin{lemma}
  The discrete dynamics of the EMA algorithm in the linear case are
  given by
  \begin{equation}
    \left\{\begin{array}{l}
             A_{t}=A_{t-1}-\mu_t(B_{t-1}^TR(A_{t-1},B_{t-1},C_{t-1})+\lambda A_{t-1}),\\
             B_t= B_{t-1}-\nu_t(R(A_{t-1},B_{t-1},C_{t-1})A_{t-1}^T+\lambda B_{t-1}),\\
             C_t=\alpha C_{t-1}+(1-\alpha)A_t,
           \end{array}\right.
\label{eq:lingrad}
\end{equation}
where $R(A,B,C)\eqdef BA[xx^T]-C[yx^T]$.
\label{lemma:discdyn}
\end{lemma}
 The dynamics for the SG algorithm are obtained by taking $C=A$ in
Eq.~(\ref{eq:lingrad}) and not using the $C$ update.
The stationary points of SG must satisfy the $p+q$  equations in
$p+q$ unknowns defined by $\bar{P}=0$ and $\bar{Q}=0$, with $p=nm$ and
$q=n^2$, whose solutions include $A=0$ and $B=0$. {\em In particular,
 unlike the  nonlinear case, it is {\em a priori} possible
  in the linear setting for a limit point of the SG or EMA algorithm
  to be the degenerate global minimum associated with $A=0$
  and $B=0$.}\footnote{The degenerate solution where $f_\theta$
  is a nonzero constant and $g_\psi$ is the identity does not occur in
  the linear case since $Ax$ constant implies $A=0$.}
The following lemma follows easily from  Eq.~(\ref{eq:lingrad}) and is reminiscent of prior results on gradient descent for deep linear networks \citep{baldi1989neural}.

\begin{lemma}
 When $\lambda>0$,  the limit points of the SG algorithm, if they exist, satisfy
$B^TB=AA^T$.
\label{lemma:aat-btb}
 \end{lemma}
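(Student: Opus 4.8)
The plan is to read off the two zero-gradient conditions that a limit point must satisfy and then eliminate the data-dependent factor between them by a single right multiplication. Setting $C=A$ in Eq.~(\ref{eq:lingrad}), any limit point of the SG algorithm satisfies $\bar{P}(A,B,A)=0$ and $\bar{Q}(A,B,A)=0$, that is,
\begin{equation}
B^T(BA[xx^T]-A[yx^T])+\lambda A=0,\qquad (BA[xx^T]-A[yx^T])A^T+\lambda B=0.
\end{equation}
The first step is to abbreviate the common $n\times m$ factor as $M\eqdef BA[xx^T]-A[yx^T]$, so that these two conditions read simply $B^TM+\lambda A=0$ and $MA^T+\lambda B=0$.

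First I would right-multiply the equation $B^TM+\lambda A=0$ by $A^T$, obtaining $B^TMA^T+\lambda AA^T=0$. The second condition supplies exactly the value $MA^T=-\lambda B$, and substituting this in gives $-\lambda B^TB+\lambda AA^T=0$. Since $\lambda>0$ by hypothesis, dividing through by $\lambda$ yields $AA^T=B^TB$, which is the claim.

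I do not expect a genuine obstacle here: the identity is a short algebraic consequence of the two stationarity equations, and the only points requiring care are that the substitution step uses precisely the $\bar{Q}=0$ equation (so the argument holds at stationary points, not for arbitrary $A$ and $B$), and that $\lambda>0$ is needed to cancel the scalar factor. It is worth remarking that the same manipulation applies verbatim to EMA limit points, since under the standing assumption $\alpha\neq1$ the update (c) forces $C=A$ at any EMA equilibrium as well, so the identical pair of equations holds and the conclusion $AA^T=B^TB$ carries over.
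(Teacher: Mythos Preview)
Your proof is correct and follows essentially the same approach as the paper: right-multiply the $\bar{P}=0$ equation by $A^T$, combine with the $\bar{Q}=0$ equation (the paper left-multiplies the latter by $B^T$ and subtracts, whereas you substitute $MA^T=-\lambda B$, which is the same manipulation), and divide by $\lambda>0$. Your remark that the argument extends to EMA equilibria is also correct and consistent with the paper's observations elsewhere.
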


We switch now to a continuous dynamical system perspective, following~\citep{Littwin24,Tian21,Wang21}, to simplify the analysis. The continuous version of Eq.~(\ref{eq:lingrad}) is
\begin{equation}
\left\{\begin{array}{l}
\dot{A}=-(B^TR(A,B,C)+\lambda A),\\
\dot{B}=-(R(A,B,C)A^T+\lambda B),\\
\dot{C}=(1-\alpha)(A-C).\end{array}\right.
     \label{eq:doteqs}
\end{equation}
where $R(A,B,C)$ is the same as in Lemma~\ref{lemma:discdyn}. At a limit point, we have in
addition $\dot{A}=0$, $\dot{B}=0$ and multiplying the first equation
on the right by $A^T$ and the second one on the left by $B^T$, then
subtracting the results yields
$B^TB=AA^T$.
Given Eq.~(\ref{eq:doteqs}) we can now rederive two results
from~\citep{Tian21}.

\begin{lemma}
Given the dynamical system associated with the gradient flow for the
original objective function of Eq.~(\ref{eq:mainobj}), the matrix
$A$ always converges to zero.
\label{lemma:dyn2}
\end{lemma}

This is Theorem 2 in~\citet{Tian21}, but without the assumptions made
in the original proof.
This result implies that there is no
other limit point than the global trivial minima where $A=0$ and $B$
can assume any value. \citet{Tian21} (and~\citep{Littwin24,Wang21})
interpret Eq.~(\ref{eq:doteqs}) as defining a {\em gradient
  flow}.\comment{\footnote{Most of their analysis is actually
    concerned with the EMA procedure rather than the SG algorithm
    considered here but this does not change our conclusions.}} Note
that it might be more appropriate to see these equations as defining a
general {\em flow} instead, that is, a first-order ordinary equation
whose integral curves are the output of the algorithm since we have
shown that their corresponding vector fields are not the gradients of a well
defined function by Proposition~\ref{prop:noptim}.  We can now state in our setting Theorem 1
from~\citep{Tian21} as follows.

\begin{lemma}
Under the SG or EMA dynamics, the difference of the two matrices $B^TB$ and
$AA^T$ tends to zero as $t$ tends to infinity.
\label{lemma:th1}
\end{lemma}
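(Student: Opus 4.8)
The plan is to study the single matrix quantity $M \eqdef B^TB - AA^T$ and show it satisfies a decoupled linear matrix ODE. First I would differentiate $M$ along the continuous SG/EMA dynamics of Eq.~(\ref{eq:doteqs}), using $\dot A = -(B^TR + \lambda A)$ and $\dot B = -(RA^T + \lambda B)$ with $R \eqdef R(A,B,C)$, together with the product rules $\frac{d}{dt}(B^TB) = \dot B^T B + B^T\dot B$ and $\frac{d}{dt}(AA^T) = \dot A A^T + A\dot A^T$.

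The crux of the argument is that every term containing $R$ cancels. Expanding $\frac{d}{dt}(B^TB)$ produces $-AR^TB - B^TRA^T - 2\lambda B^TB$, while expanding $\frac{d}{dt}(AA^T)$ produces the identical $R$-terms $-B^TRA^T - AR^TB$ together with $-2\lambda AA^T$. This coincidence is structural: $R$ enters $\dot A$ on the left through $B^T$ and enters $\dot B$ on the right through $A^T$, so the two contributions are exactly the transposes of the same product $B^TRA^T$. Subtracting therefore eliminates $R$ entirely, and since the computation never uses the actual value of $C$ (for SG one simply sets $C = A$), the same derivation covers both procedures regardless of how $R$ is defined.

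What is left is $\dot M = -2\lambda(B^TB - AA^T) = -2\lambda M$. As this holds entrywise, each coefficient obeys the scalar equation $\dot M_{ij} = -2\lambda M_{ij}$ and hence $M_{ij}(t) = M_{ij}(0)\,e^{-2\lambda t}$, giving $M(t) = e^{-2\lambda t}M(0) \to 0$ as $t \to \infty$ because $\lambda > 0$. This establishes that $B^TB - AA^T$ tends to zero, and in fact does so exponentially fast.

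I do not expect a genuine obstacle here, since the calculation is short and the cancellation is clean. The only delicate point is the transpose bookkeeping when forming $\dot B^T B$ and $A\dot A^T$: a single misplaced transpose or sign would spoil the exact cancellation of the $R$-terms, so I would expand those two products with care. It is worth noting that this conclusion is stronger than the limit-point identity $B^TB = AA^T$ obtained earlier from $\bar P = \bar Q = 0$, since it controls the difference along the entire trajectory rather than only at equilibria.
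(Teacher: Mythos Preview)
Your proof is correct and follows essentially the same route as the paper's: both establish that $M=B^TB-AA^T$ satisfies $\dot M=-2\lambda M$ and hence $M(t)=e^{-2\lambda t}M(0)\to 0$. The paper arrives at this ODE by first deriving the intermediate relation $B^T(\dot B+\lambda B)=(\dot A+\lambda A)A^T$, symmetrizing, and applying the integrating factor $e^{2\lambda t}$, whereas you compute $\dot M$ directly via the product rule; these are just presentational variants of the same cancellation.
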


\subsection{Characterizing the equilibria of the SG and EMA algorithms}
\begin{proposition}
  Assuming that the matrix $A$ has maximal rank $m<n$ at equilibria of the SG
   algorithm, these equilibria are the $(A,B)$ pairs such that
  \begin{equation}
    ([xx^T]S+\lambda\text{Id})(S[xx^T]+\lambda\text{Id})=[xy^T]S[yx^T],\,\,
    \text{where}\,\,
    S=A^TA.
  \label{eq:Achar}
\end{equation}
and
\begin{equation}
  B=A[yx^T]A^TW^{-1}\,\,\text{where}\,\,
  W=A[xx^T]A^T+\lambda\text{Id}.
  \label{eq:Bchar}
\end{equation}
Assuming again that the matrix $A$ has maximal rank $m<n$ at the equilibria of
the EMA algorithm, these equilibria are the $(A,B,C)$ triples where
$A$ verifies Eq.~(\ref{eq:Achar}), $B$ is given by
Eq.~(\ref{eq:Bchar}) and $C=A$.
In both cases, the $(A,B)$ pairs associated with equilibria  also
verify $B^TB=AA^T$.
\label{prop:Eqchar}
\end{proposition}

Equation~(\ref{eq:Achar}) is a system of $m(m+1)/2$
quadratic equations in the $m(m+1)/2$ independent entries of the
symmetric matrix $S$, which is positive definite since $A$ is assumed to
have maximal rank. Such a system admits in general at most
$2^{m(m+1)/2}$  solutions, and we obtain the
following result.

\begin{corollary}
  Let $K$ denote the number of distinct real solutions $S_k$
  ($k=1,\ldots,K$) of Eq.~(\ref{eq:Achar}) such that $S_k$ is positive
  definite and, for $k=1,\ldots,K$, let $\sqrt{S}_k$ denote the
  unique positive definite square root of $S_k$. The equilibria of the SG algorithm can
  be decomposed into $K$ sub-varieties of $\RR^{n\times
    m}\times\RR^{m\times m}$ formed by pairs $(A,B)$ such that
  $A$ belongs to
  \begin{equation}
    \mathcal{A}_k=\{A\in\RR^{n\times m},\,\,A^TA=S_k\}=
    \{U\sqrt{S}_k,\,\,\text{where}\,\, U\in\RR^{n\times
      m}\,\,
    \text{and}\,\, U^TU=\text{Id}\},
    \end{equation}
    and $B$ satisfies Eq.~(\ref{eq:Bchar}). The equilibria $(A,B,C)$
    of the EMA algorithm can be characterized in a similar way to $A$ in $\mathcal{A}_k$, $B$ satisfying Eq.~(\ref{eq:Bchar}) and $C=A$.
    \label{corollary1}
  \end{corollary}

One might ask whether such maximal rank solutions exist in the first place. The following proposition, whose proof is provided in the appendix, leverages Brouwer's fixed point theorem and the implicit function theorem to establish that, in general, they do exist.
\begin{proposition}\label{prop:genericity_full_rank}
Let $\mathbb{P}_0$ be some data distribution.
	For any positive $\epsilon$, there always exists a perturbed data distribution $(\tilde{x},\tilde{y})\sim \mathbb{P}_{\epsilon}$ that is $\epsilon$-close to $\mathbb{P}$ (e.g., in the 2-Wasserstein distance sense), so that there exist equilibria of the SG and EMA algorithms with $A$ having maximal rank $m$ for $\lambda$ small enough.
\end{proposition}

Using classical results on the dynamics of  differential
equations will now allow us to prove our last
result. Let us first define the stable equilibria
of such a dynamical system~\citep{Arnoldiff92}.

\begin{definition}
  An equilibrium for the dynamical system  $\dot{z}=v(z)$,
  where $v$ is a smooth  field over $\RR^d$, is a point $e$
 where $v(e)=0$. An equilibrium $e$ is called (Lyapunov)
  stable when solutions of the differential equation with initial
  values close  to $e$ converge uniformly to a nearby point.  A
  stable equilibrium is said to be asymptotically stable when any
  solution started  close to $e$ converges to $e$.
  \end{definition}
\begin{theorem}~\citep[Chap. 3]{Arnoldiff92}
Consider a dynamical system  $\dot{x}=v(x)$ whose dynamics can be
approximated  by the linear operator $J$:
$v(x)=Jx+O(\Vert x\Vert^2)$.
A sufficient condition for an equilibrium to be
asymptotically
stable is that all eigenvalues of $J$ have a negative real part.\label{th:Arnold}
\end{theorem}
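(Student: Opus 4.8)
The plan is to prove this by \emph{Lyapunov's direct method}, building an explicit quadratic Lyapunov function out of the single hypothesis that $A$ is Hurwitz (every eigenvalue has strictly negative real part). Writing the dynamics near the equilibrium as $\dot{x}=Ax+g(x)$ with $g(x)=O(\|x\|^2)$, I would first recall the standard fact that, because the spectral abscissa of $A$ is negative, the matrix integral $P=\int_0^{+\infty}e^{A^Tt}e^{At}\,dt$ converges and defines a symmetric positive definite matrix solving the Lyapunov equation $A^TP+PA=-I$. This is exactly where the spectral hypothesis is used: negative real parts yield a uniform bound $\|e^{At}\|\le Me^{-\beta t}$ for some $M\ge1$ and $\beta>0$, which makes the integrand integrable; the identity $A^TP+PA=\int_0^{+\infty}\frac{d}{dt}(e^{A^Tt}e^{At})\,dt=-I$ then follows by the fundamental theorem of calculus, and positive definiteness from $x^TPx=\int_0^{+\infty}\|e^{At}x\|^2\,dt$.

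Next I would set $V(x)=x^TPx$, a positive definite function vanishing only at the equilibrium $x=0$, and compute its derivative along trajectories, $\dot{V}=x^T(A^TP+PA)x+2x^TPg(x)=-\|x\|^2+2x^TPg(x)$. Using the quadratic bound $\|g(x)\|\le C\|x\|^2$ valid on a neighborhood of the origin, the cross term is controlled by $2C\|P\|\,\|x\|^3$, so $\dot{V}\le-\|x\|^2\bigl(1-2C\|P\|\,\|x\|\bigr)$. On the ball $\|x\|<1/(4C\|P\|)$ this gives $\dot{V}\le-\tfrac12\|x\|^2<0$ for $x\neq0$, so $V$ is a strict Lyapunov function on a neighborhood of the equilibrium. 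Lyapunov's theorem then yields asymptotic stability: small enough sublevel sets of $V$ are forward invariant and strictly decreasing, forcing any trajectory started close enough to the origin to remain nearby and converge to it.

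An alternative I would keep in reserve uses the variation-of-constants formula $x(t)=e^{At}x(0)+\int_0^t e^{A(t-s)}g(x(s))\,ds$ together with the exponential bound $\|e^{At}\|\le Me^{-\beta t}$ and a Gronwall-type estimate applied to $e^{\beta t}\|x(t)\|$; the smallness of $g$ near the origin then lets one close a continuation argument showing the solution stays small and decays like $e^{-\beta t/2}$.

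The main obstacle in either route is not the computation itself but justifying the \emph{quantitative} control of the higher-order remainder uniformly enough to close the argument. In the Lyapunov approach one must ensure the quadratic bound on $g$ holds on a fixed neighborhood and that trajectories never leave it, which is guaranteed precisely by the invariance of the sublevel sets of $V$; in the Gronwall approach one must run a bootstrap/continuation argument to rule out the trajectory escaping the region where $\|g(x)\|\le C\|x\|^2$ remains valid. Both reduce to the same essential point, namely that the strictly negative spectral abscissa of $A$ produces a margin of exponential decay in the linear part that dominates the $O(\|x\|^2)$ term sufficiently close to the equilibrium.
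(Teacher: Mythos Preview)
Your proof is correct and is in fact one of the standard textbook arguments for this linearization principle (sometimes called Lyapunov's indirect method): construct $P$ from the integral $\int_0^\infty e^{A^Tt}e^{At}\,dt$, verify $A^TP+PA=-I$, and then show that $V(x)=x^TPx$ is a strict Lyapunov function on a neighborhood where the $O(\|x\|^2)$ remainder is dominated. The alternative via variation of constants and Gronwall is also correct and is essentially the argument given in Arnold's book, which is the reference cited.

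Note, however, that the paper does \emph{not} supply its own proof of this theorem: it is stated as a classical result imported from \cite[Chap.~3]{Arnoldiff92} and used as a black box in the proof of Proposition~\ref{prop:dynamics}. So there is no ``paper's proof'' to compare against; your contribution is a self-contained justification of a result the paper simply cites.
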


Armed with this classical result, we prove in the appendix the following proposition.
\begin{proposition}
  The equilibria of the dynamical system associated with
  the SG or the EMA procedure, if any, are, in general, asymptotically stable.
  \label{prop:dynamics}
\end{proposition}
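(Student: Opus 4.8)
The plan is to apply Theorem~\ref{th:Arnold} directly: I would linearize the flow of Eq.~(\ref{eq:doteqs}) (for EMA), or its $C=A$ specialization (for SG), at an equilibrium $(A_*,B_*,C_*)$ and show that the resulting linear operator $\mathcal{L}$, acting on the finite-dimensional space of matrix tuples $(\delta A,\delta B,\delta C)$, has all its eigenvalues in the open left half-plane. First I would compute the differential of the vector field. Writing $\delta R=\delta B\,A[xx^T]+B\,\delta A\,[xx^T]-\delta C\,[yx^T]$ for the perturbation of $R(A,B,C)=BA[xx^T]-C[yx^T]$, and differentiating $-\bar P=-(B^TR+\lambda A)$ and $-\bar Q=-(RA^T+\lambda B)$ with the identities of Eq.~(\ref{eq:identities}), I obtain the blocks of $\mathcal{L}$ explicitly in terms of $A_*,B_*,C_*$, the second-moment matrices $[xx^T]$ and $[yx^T]$, the residual $R$, and $\lambda$; the $C$-row of $\mathcal{L}$ is simply $(1-\alpha)(\delta A-\delta C)$.

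Next I would simplify these blocks using the equilibrium relations established earlier. At a limit point we have $C_*=A_*$ (since $\alpha\neq1$), the constraint $B_*^TB_*=A_*A_*^T$ of Lemma~\ref{lemma:aat-btb}, and the stationarity identities $B_*^TR=-\lambda A_*$ and $RA_*^T=-\lambda B_*$ coming from $\bar P=\bar Q=0$. These substitutions collapse several of the data-dependent terms and expose the damping structure of $\mathcal{L}$: the weight decay contributes a $-\lambda I$ to the $A$- and $B$-diagonal blocks, while the target update contributes $-(1-\alpha)I$ to the $C$-block, both strictly negative under our standing assumptions $\lambda>0$ and $\alpha<1$.

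The core of the argument is then to show that this damping dominates the remaining, data-dependent and generally non-symmetric, coupling terms, so that the whole spectrum of $\mathcal{L}$ stays strictly in the left half-plane. I would try to certify this by showing that $-\mathcal{L}$ has a positive-definite symmetric part $-(\mathcal{L}+\mathcal{L}^T)/2\succ0$ (equivalently, a quadratic Lyapunov function), since then $\mathrm{Re}\,\sigma=-\mathrm{Re}(z^*(-\mathcal{L})z)/|z|^2<0$ for every eigenpair $(\sigma,z)$. The $-\lambda I$ and $-(1-\alpha)I$ terms give a uniform contribution to this symmetric part, which I would balance against the norms of the coupling blocks; where the inequality fails to be strict for some exceptional configuration of $[xx^T]$, $[yx^T]$, $A_*$ and $B_*$, the genericity clause enters: the locus where an eigenvalue of $\mathcal{L}$ touches the imaginary axis is cut out by the vanishing of a polynomial (a resultant or discriminant in the entries of the moment matrices), hence a measure-zero set, so for generic data all inequalities are strict and Theorem~\ref{th:Arnold} yields asymptotic stability.

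I expect the main obstacle to be precisely this last step: because of the terms $\delta B^TR$ and $R\,\delta A^T$ and the $A$--$C$ coupling, $\mathcal{L}$ is neither symmetric nor block-triangular, so its spectrum cannot be read off its diagonal. Controlling these off-diagonal contributions against the $\lambda$- and $(1-\alpha)$-damping, and isolating the exact degenerate data for which stability can fail so as to justify the phrase ``in general'', will require the bulk of the work; this is where the structural identities of Eq.~(\ref{eq:lingrad}) and the constraint $B^TB=AA^T$ should be most useful.
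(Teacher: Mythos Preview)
Your linearization set-up and the list of equilibrium identities you plan to use ($C_*=A_*$, $B_*^TB_*=A_*A_*^T$, $B_*^TR=-\lambda A_*$, $RA_*^T=-\lambda B_*$) are exactly right, and they match the paper's starting point. The divergence is in how you propose to extract the sign of the eigenvalues.

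The paper does \emph{not} attempt to show that the symmetric part $-(\mathcal{L}+\mathcal{L}^T)/2$ is positive definite. Instead it applies to the linearized eigenvalue system the very same manipulation already used in Lemmas~\ref{lemma:aat-btb} and~\ref{lemma:th1}: right-multiply the $\delta$-equation by $A^T$, left-multiply the $\varepsilon$-equation by $B^T$, and subtract. At the equilibrium this kills all the data-dependent blocks simultaneously and leaves the purely algebraic relation
\[
(\lambda+\mu)\,M=-\lambda\,M^T,\qquad M=\delta A^T-B^T\varepsilon,
\]
which forces $\mu\in\{-2\lambda,0\}$ when $M\neq0$ (the $\mu=0$ case being non-generic); the $M=0$ case is then handled by a second, similar contraction that produces a Rayleigh-quotient expression for $\mu$ with a definite sign. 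This is the key structural trick you are missing.

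Your alternative route---bounding the coupling blocks against the $-\lambda I$ and $-(1-\alpha)I$ damping so that the symmetric part of $-\mathcal{L}$ is positive definite---is a \emph{strictly stronger} condition than $\mathrm{Re}\,\sigma(\mathcal{L})<0$, and there is no reason to expect it to hold here. The coupling terms $B^T\varepsilon A[xx^T]$, $\varepsilon^T R$, $R\,\delta^T$, etc., scale with the data moments and with $A_*,B_*$, not with $\lambda$; since nontrivial equilibria exist precisely when $\lambda$ is small (cf.\ the $m=1$ analysis in the appendix, where one needs $\tau^2>4\rho\lambda$), the off-diagonal part of the symmetrized operator will typically dominate the $-\lambda I$ diagonal, and a simple $2\times2$ Jordan-like example shows that a stable $\mathcal{L}$ can have an indefinite symmetric part. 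So ``balancing the norms of the coupling blocks against the damping'' is not just the hard step---it is likely to be false as stated. The genericity argument you sketch at the end would then be vacuous, since it presupposes the borderline case is semidefinite rather than indefinite. The fix is not to look for a quadratic Lyapunov function but to exploit the specific bilinear structure of $\bar P$ and $\bar Q$ via the $M=\delta A^T-B^T\varepsilon$ combination.
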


Note that Proposition~\ref{prop:dynamics} does not imply that a
nontrivial equilibrium exists or that the dynamics converge to such an
equilibrium. It is, however, valid even in the case where $\alpha=1$.

\begin{figure}
  \vspace{-2em}
  \centerline{%
    \includegraphics[height=0.24\linewidth]{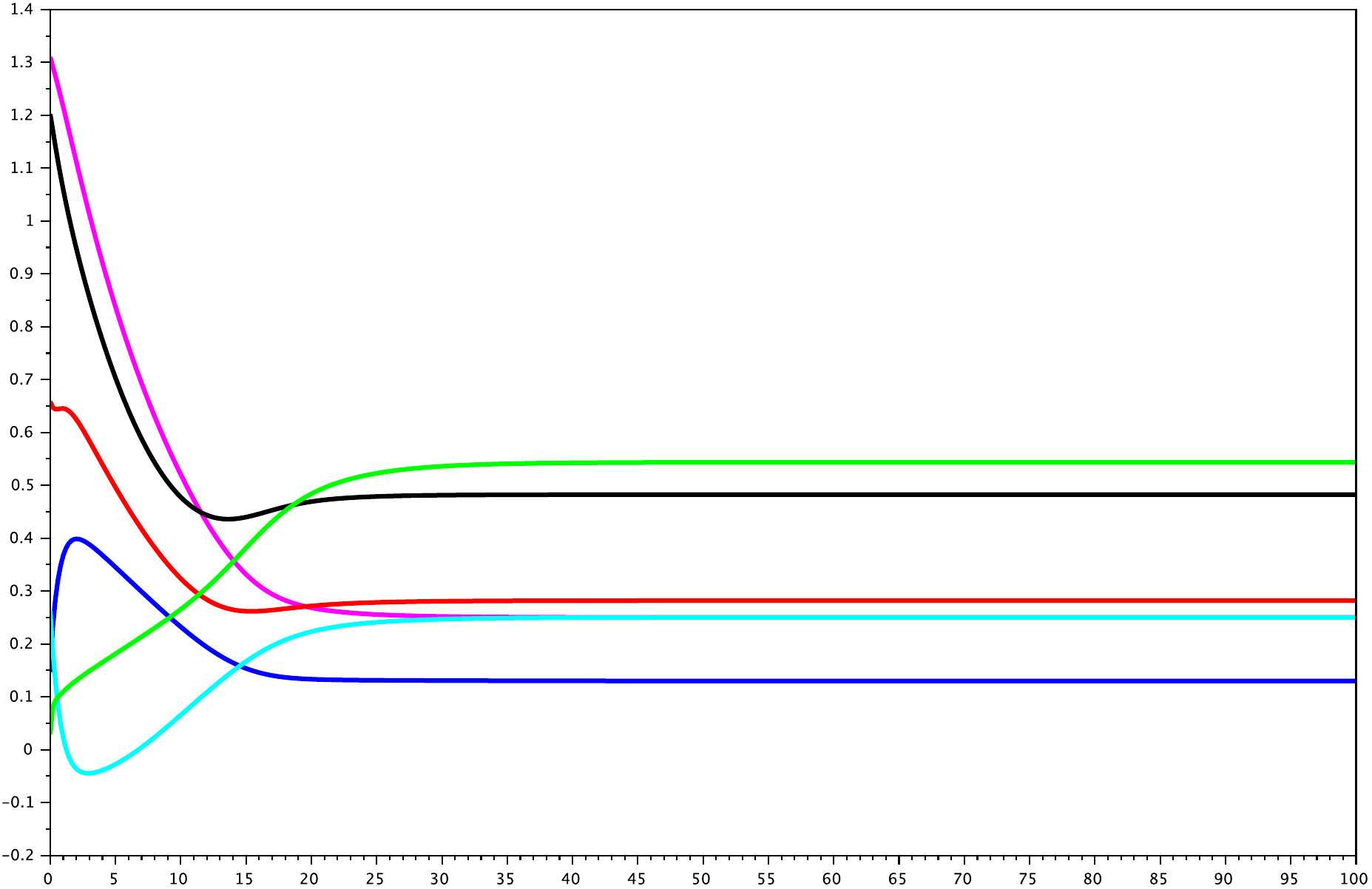}
    \includegraphics[height=0.24\linewidth]{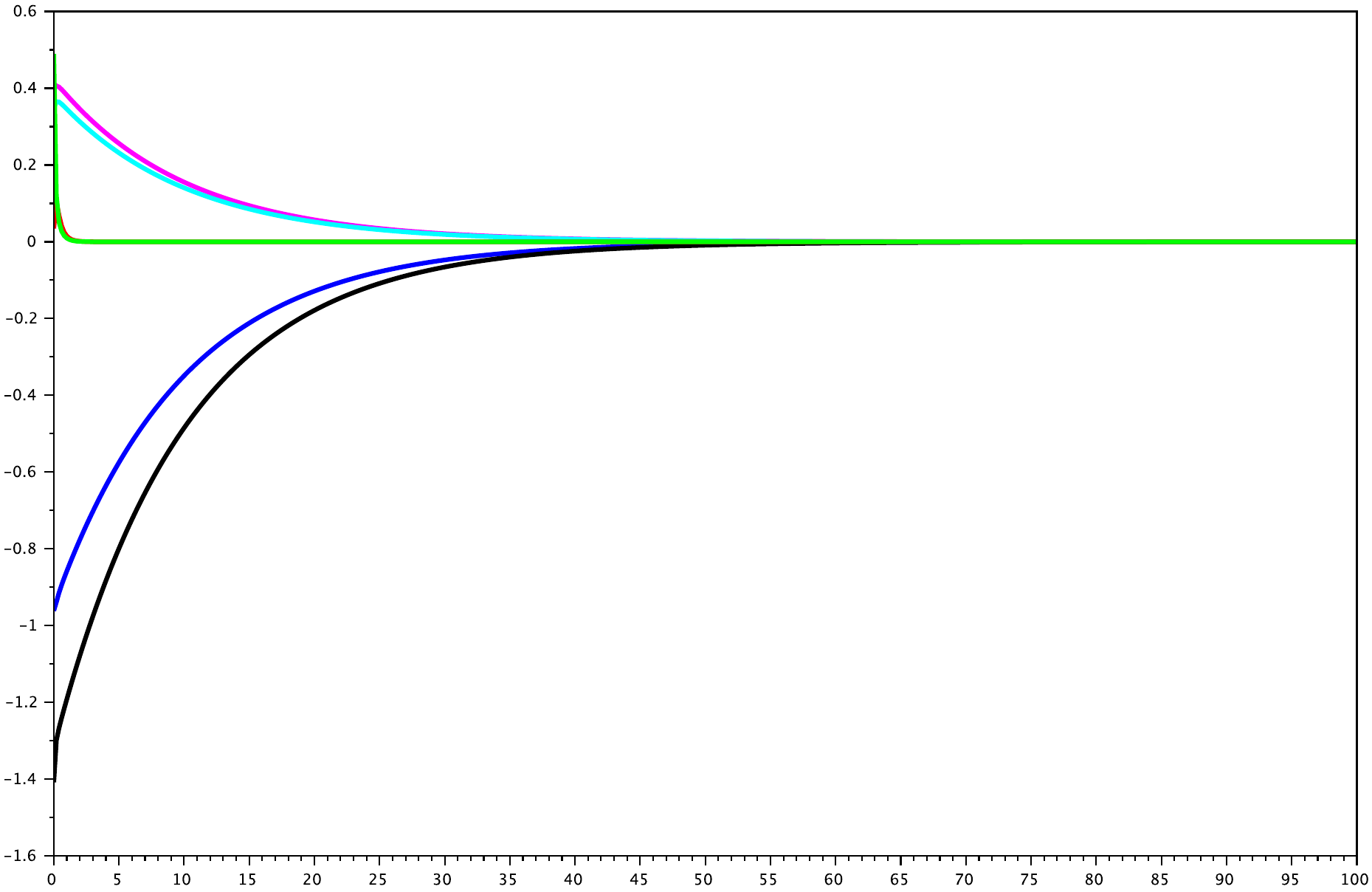}
    \includegraphics[height=0.24\linewidth]{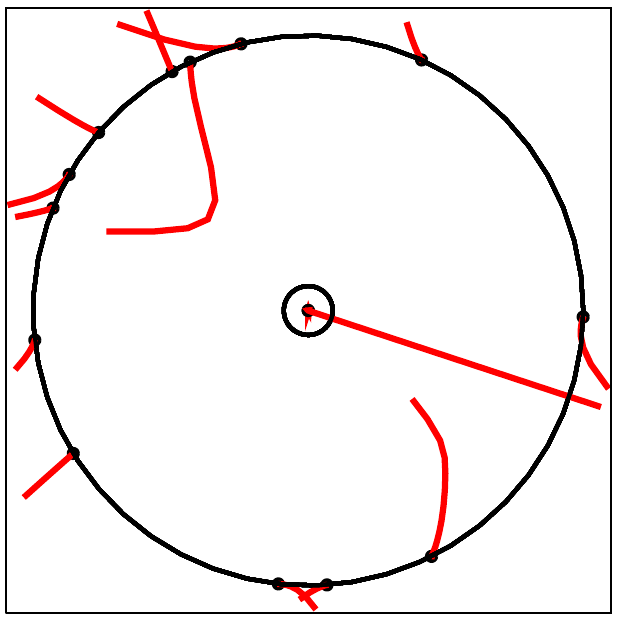}}
  \centerline{%
    \includegraphics[height=0.24\linewidth]{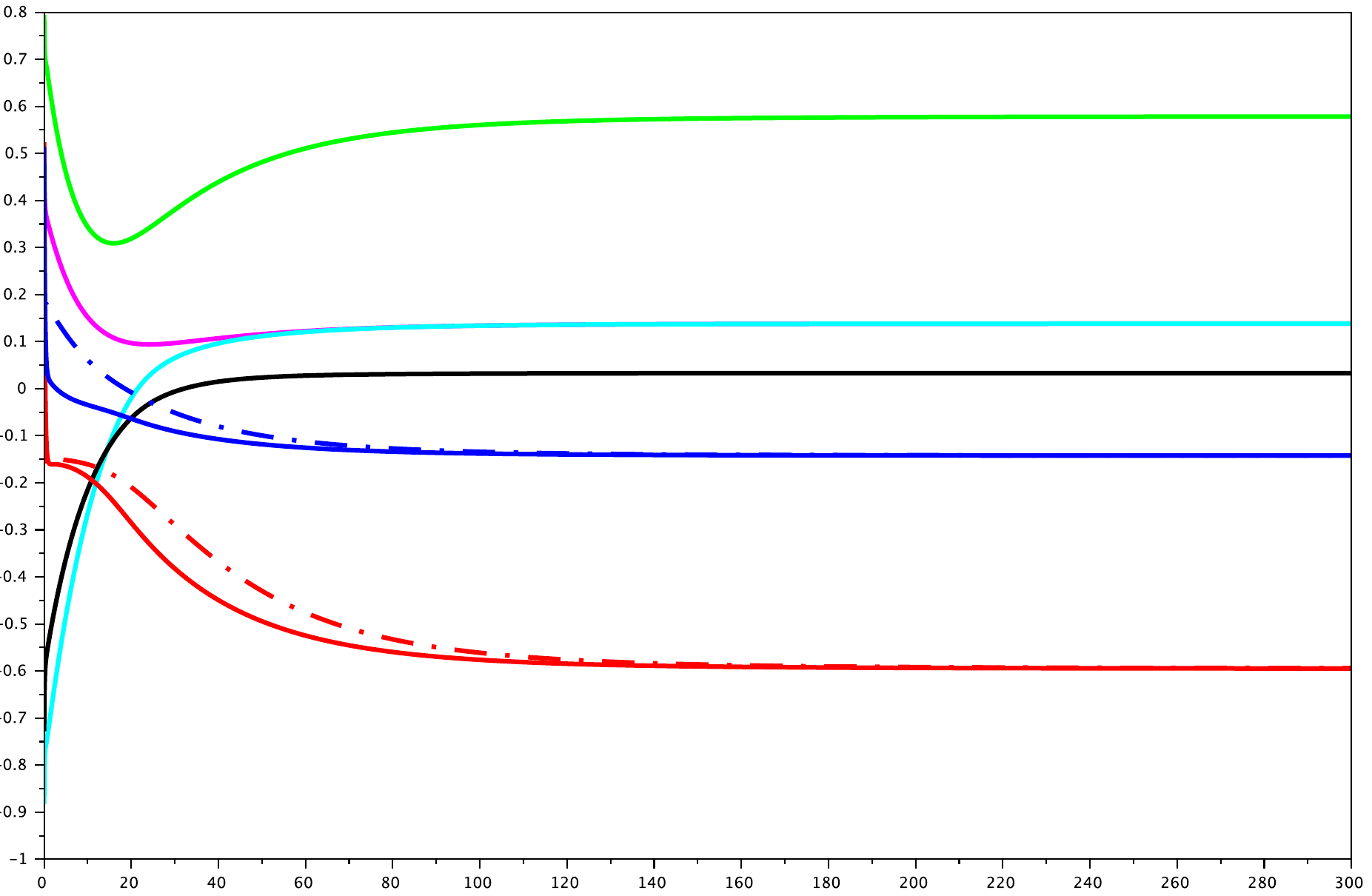}
    \includegraphics[height=0.24\linewidth]{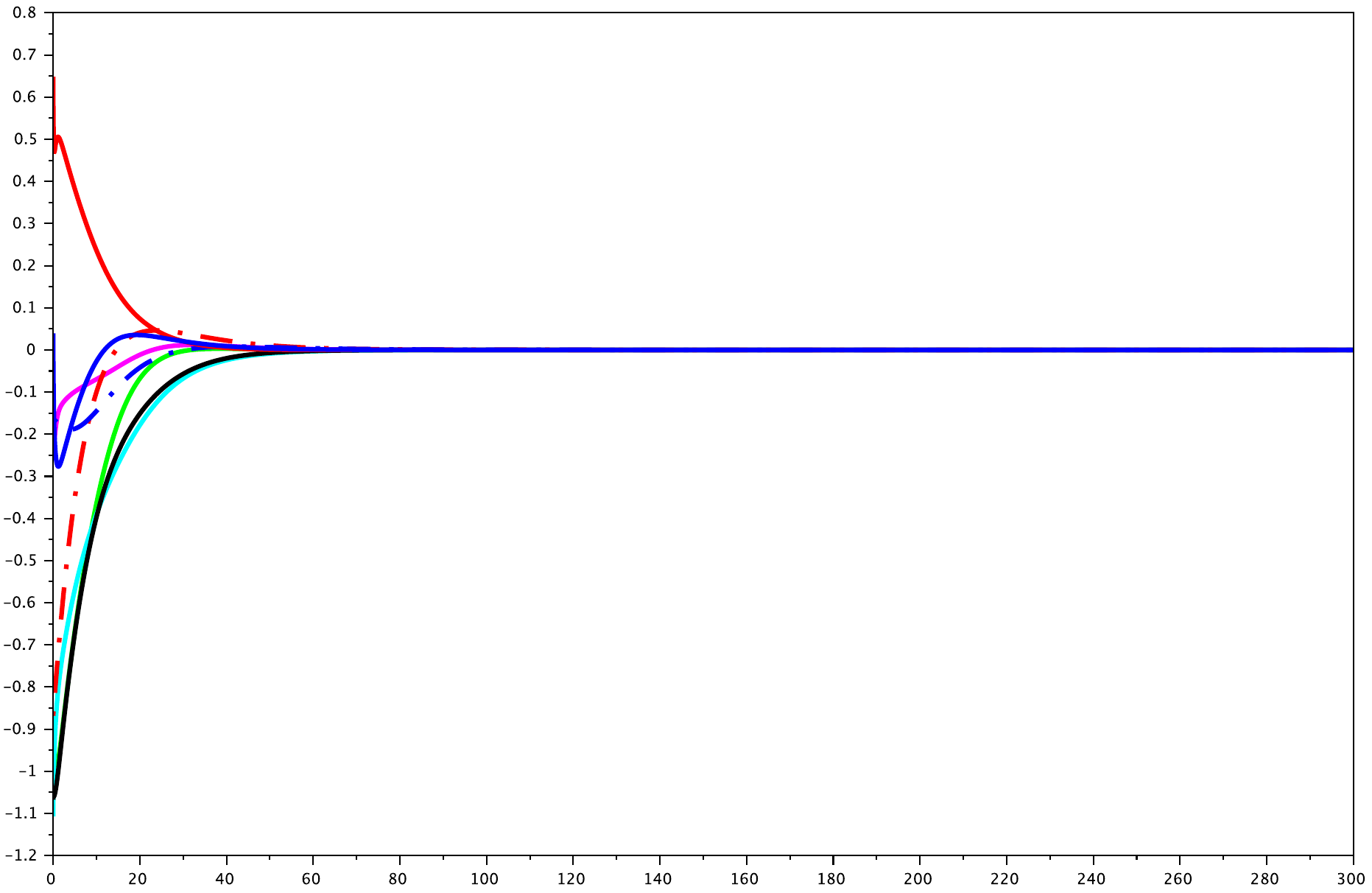}
    \includegraphics[height=0.24\linewidth]{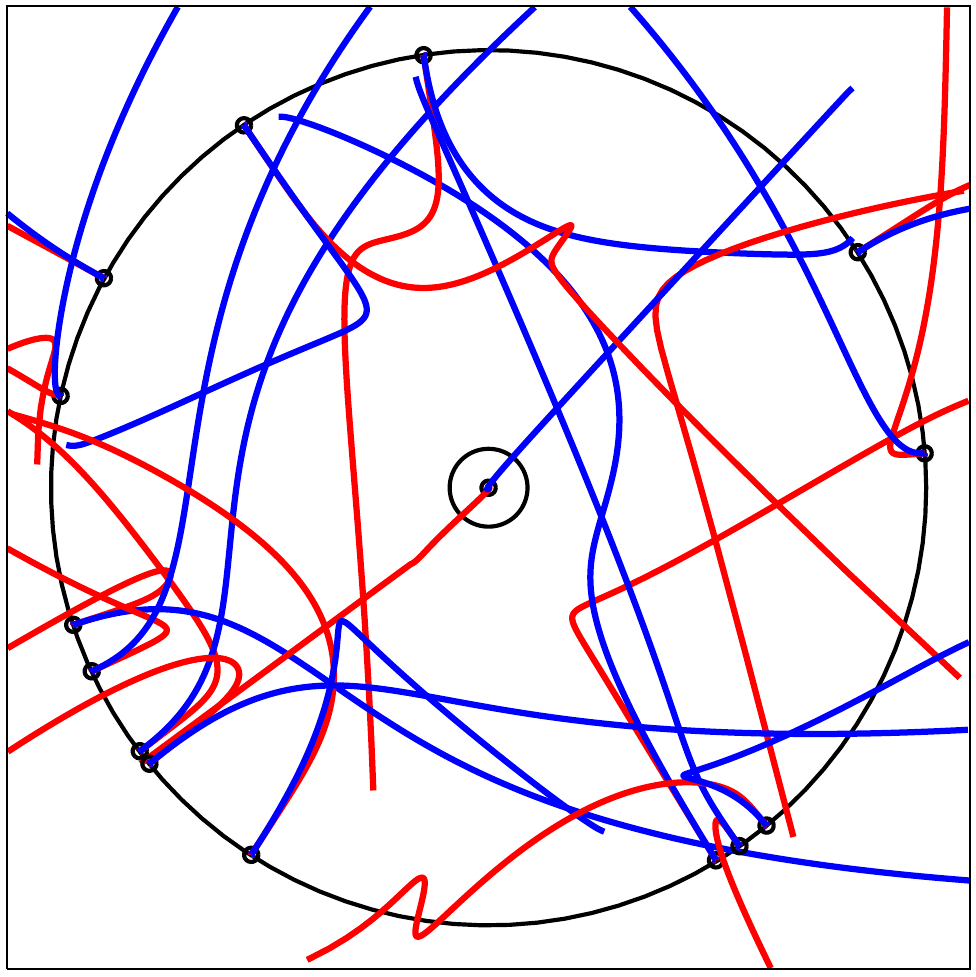}}
  \caption{\small Integration paths for the SG (top) and EMA (bottom)
    procedures. In the top (resp. bottom) part of the figure, the left
    diagram shows a sample path for the 8 (resp. 6) coefficients of
     $a,B,c$ (resp. $a,B$), the central one shows
   a path where all parameters converge toward $0$, and the
    right diagram shows 15 trajectories in the $a$ (resp. $a,c$)
    plane, starting from  random locations. One of the
    trajectories has the origin as a limit point.}
  \vspace{-1em}
  \label{fig:integ}
\end{figure}

\subsection{Experiments with synthetic data\label{sec:synthexp}}
For these experiments, we consider the case of a scalar input space
($m=1$) since $A$ is now a vector, which simplifies visualizations and
numerical simulations, and denote by $\rho$ and $\tau$ the scalars
$[xx^T]$ and $[yx^T]$. To emphasize that these quantities are vectors,
let us use $a$ for $A$ and $c$ for $C$. In this simplified
setting, equilibria are characterized by the following proposition.

\begin{proposition}
  In the case $m=1$, a necessary and sufficient condition for the
  existence of nonzero equilibria of the SG and EMA algorithms is
  that $\Delta=\tau^2-4\rho\lambda\ge 0$. When this condition is
  satisfied, these equilibria are the pairs $(a,B)$ such that $a$ lies
  on either one of the hyperspheres $S_1$ and $S_2$ of $\RR^n$
  centered at the origin with radii $r_1=(|\tau|-\sqrt\Delta)/2\rho$
  and $r_2=(|\tau|+\sqrt\Delta)/2\rho$, and $B$ verifies
  Eq.~(\ref{eq:Bchar}). The equilibria associated with $S_2$ are
  asymptotically stable, but those associated with $S_1$ are saddle
  points. The equilibria $(a,B,c)$ of the EMA algorithm and their
  stability can be characterized in a similar fashion, with the
  additional condition that $c=a$.
  \label{prop:dynm1}
\end{proposition}

Proposition~\ref{prop:dynm1} shows that the case $m=1$ is both
illustrative of the general case because the two spheres $S_1$ and
$S_2$ are just the algebraic varieties identified in
Proposition~\ref{prop:Eqchar} and Corollary~\ref{corollary1}, and
extremely nongeneric since the equilibria associated with $S_1$
are saddle points, which never happens when $m>1$. See the proof
of Proposition~\ref{prop:dynm1} in the appendix for an explanation
of this phenomenon.

\mypar{Results and conclusions.}  Figure~\ref{fig:integ} shows sample
trajectories obtained by numerical simulations for $n=2$, so $a$ and
$c$ are points in the plane, and equilibria are located at the origin
and on two circles centered at the origin with radii $r_1<r_2$. We
have (arbitrarily) taken $\rho=3$, $\tau=2$ and $\lambda=0.1$ with
$T=300$ time steps. For the SG algorithm (Figure~\ref{fig:integ},
top), we show on the left the evolution of $a$ (red and blue lines)
and $B$ (in other colors), and in the middle a case where all
coefficients converge to zero.  Fifteen trajectories initialized from
random positions and drawn in the $a$ plane are shown on the right.
Figure~\ref{fig:integ} (bottom) illustrates the EMA algorithm.  We use
$\alpha_t=0.9+0.1t/T$ in this experiment, so $\alpha=1$ at $t=T$,
following~\cite{BYOL20}.  Although $c$ is not guaranteed to converge
to $a$ in this case, it has done so in all our trials. This is
illustrated on the left, where the $a$ parameters are shown as solid
red and blue curves, and the $c$ parameters are shown as dashed red
and blue curves (the other 4 curves correspond to $B$). The center
part of the figure is an example where all parameters converge to
zero. Sample trajectories starting from various random positions are
shown on the right as red curves for $a$ and blue curves for $c$.
Although we have not been able to prove the convergence of the EMA
and SG algorithms so far, Figure~\ref{fig:integ} is typical of our
observations: In 10,000 trials with parameters drawn at
random,\footnote{Values for $\rho$, $\tau$ and $\lambda$ are
  respectively drawn uniformly in $[0,3]$, $[-1,1]$ and
  $[0.01,0.1]$. The coordinates of initial points are drawn from
  normal distributions with zero mean and unit variance.} the two
  algorithms always converge, 92.8\% (resp.  82.0\%) to a limit point
  on the outer circle $S_2$ for EMA (resp. SG), and the rest of the
  time to the origin (as noted in Section~\ref{sec:discr}, contrary to
  the generic nonlinear case, $A=0$ and $B=0$ are equilibria in the
  linear case). As expected, we have never observed convergence to the
  saddle points of $S_1$.
  As in the nonlinear case, the SG and EMA procedures do not appear to
minimize $\bar{E}$ or $\bar{F}$.  This is illustrated by
Figure~\ref{fig:bare}, where we plot the values of $\bar{E}(a,B)$ as a
function of time for some trajectories of the SG and EMA algorithms.
Although $\bar{E}(a,B)$ appears to steadily decrease in some cases, it
definitely does not in other cases.

\begin{figure}
  \vspace{-2em}
  \centerline{%
    \includegraphics[height=0.21\linewidth]{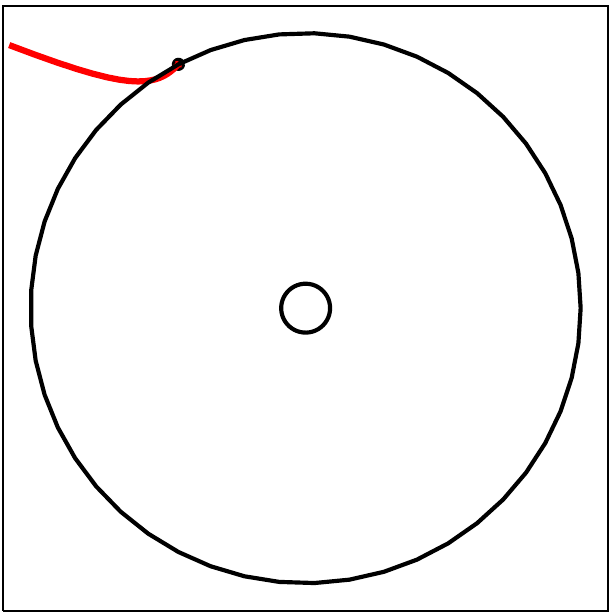}
    \includegraphics[height=0.21\linewidth]{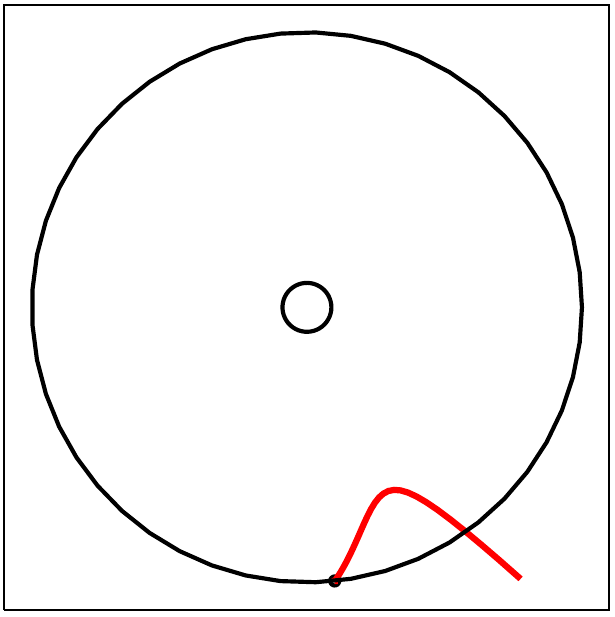}
    \includegraphics[height=0.21\textwidth]{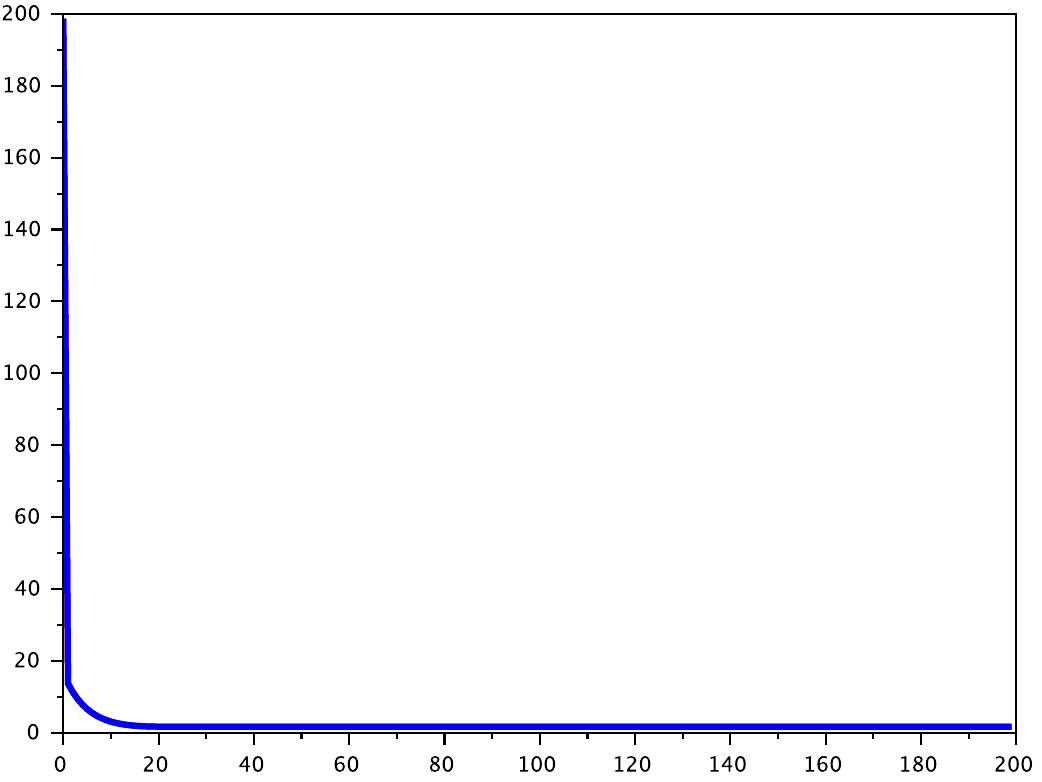}
    \includegraphics[height=0.21\linewidth]{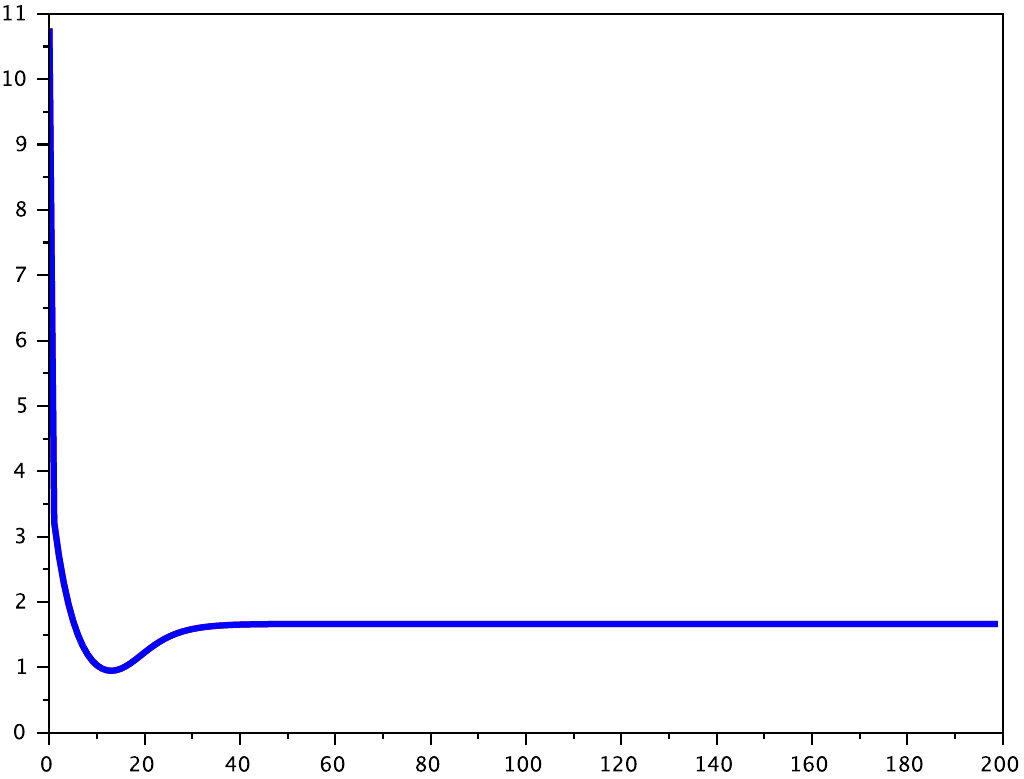}}
  \centerline{%
    \includegraphics[height=0.21\linewidth]{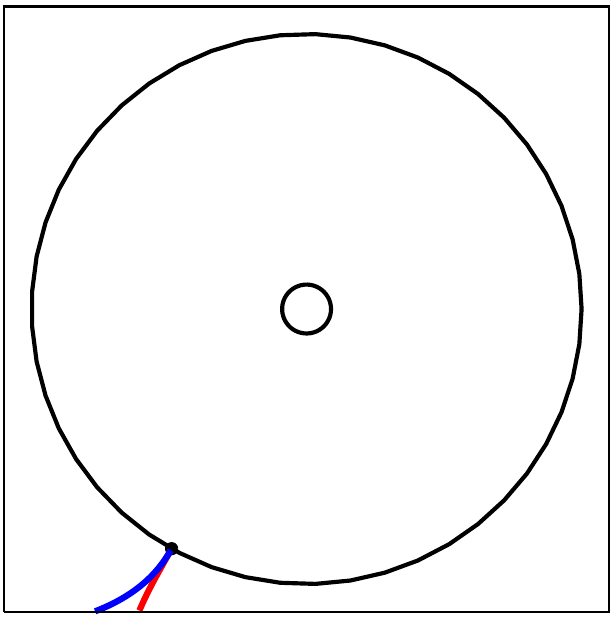}
    \includegraphics[height=0.21\linewidth]{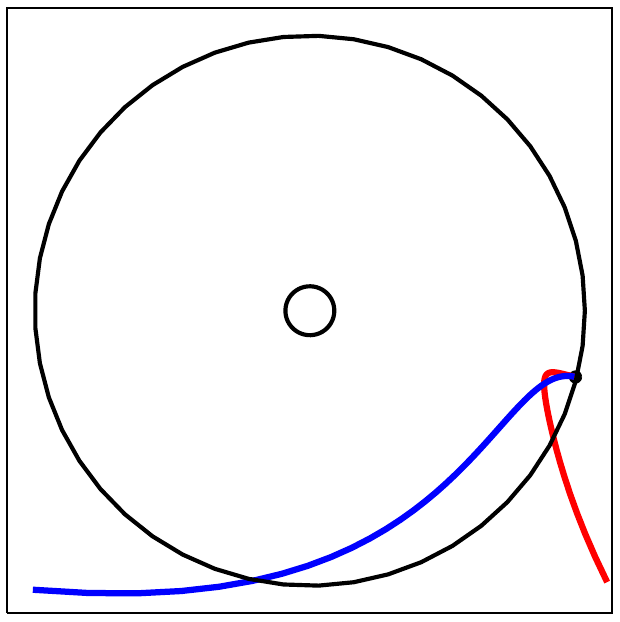}
    \includegraphics[height=0.21\textwidth]{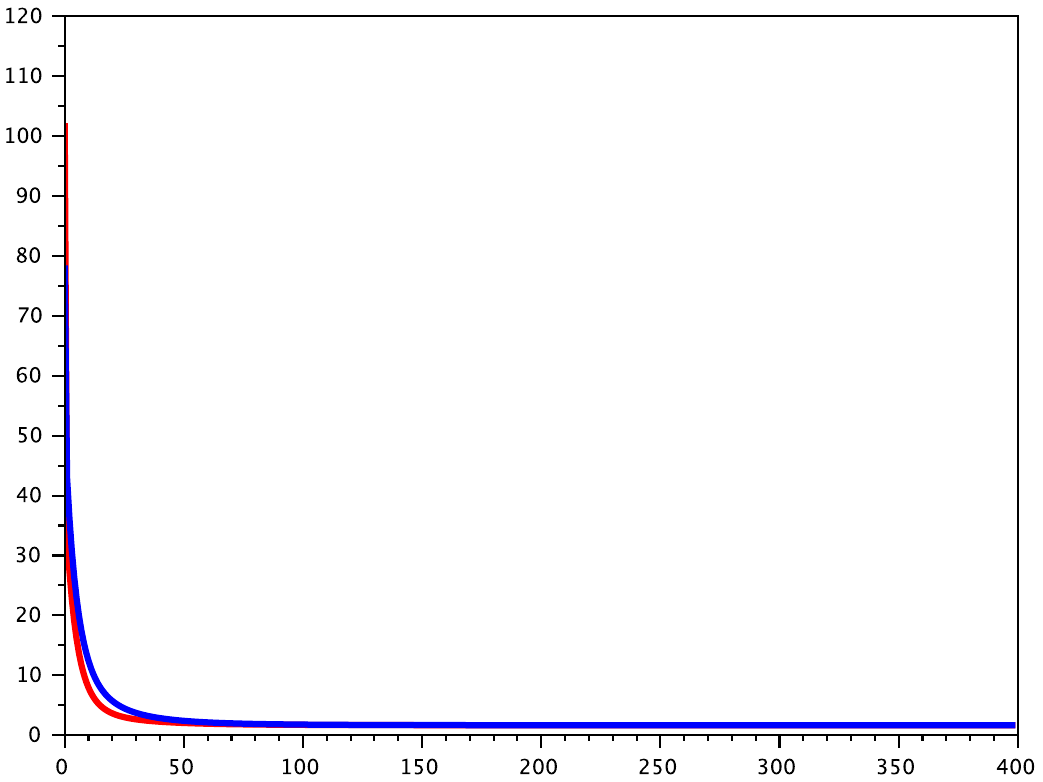}
    \includegraphics[height=0.21\linewidth]{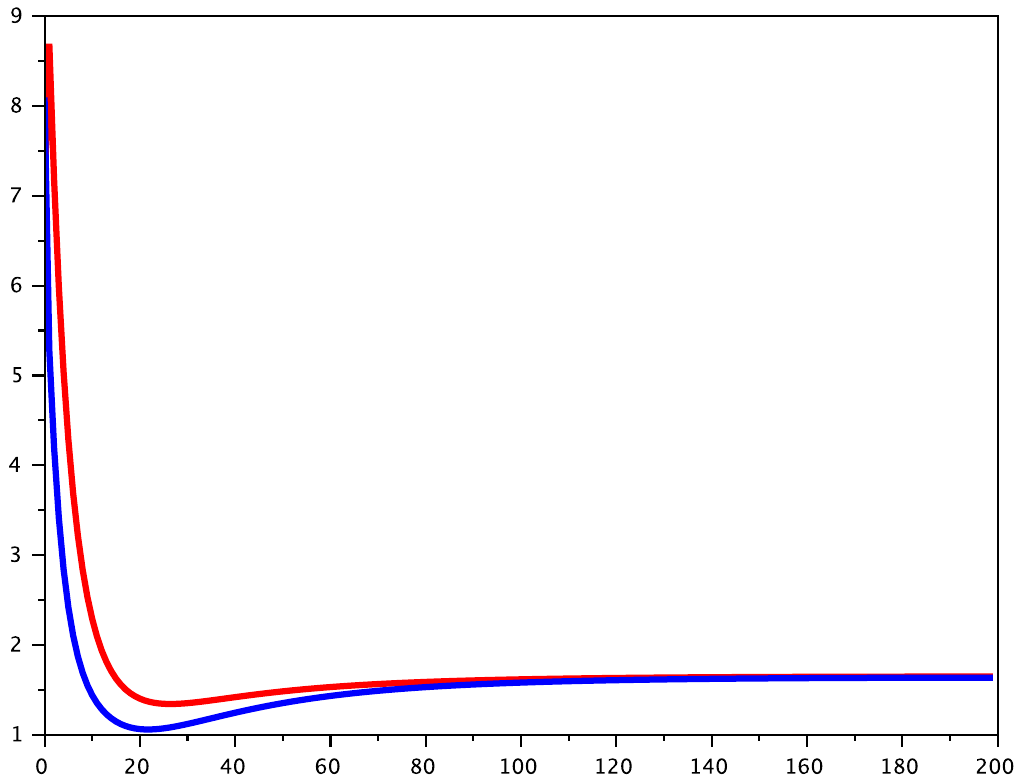}}
  \caption{\small Top: Two trajectories for the SG algorithm plotted
    in the $a$ plane (left) and the evolution of $\bar{E}(a,B)$ values
  along these trajectories (right). The error appears to decrease steadily
  with time in the first case, but not in the second one. Bottom: A
  similar diagram for the EMA algorithm. In this case we show in red
  the curve $\bar{E}(a,B)$ and in blue the curve $\bar{F}(a,B,c)$.}
  \vspace{-1em}
  \label{fig:bare}
\end{figure}

\section{Conclusions}
We have shown that the SG and EMA algorithms are not, in general,
proper {\em optimization procedures}: In particular, they do not
minimize {\em any} well defined objective function. On the other hand,
they do not lead in general to collapse when they converge, and they
enjoy interesting properties as {\em dynamical systems} since in the
linear case any nontrivial limit point is asymptotically stable thus
will not devolve into a trivial one by longer training. This point is
important in practice since the SG and EMA training procedures
empirically give good results, as shown for example
in~\citep{Bardes24,BYOL20,SimSiam21}, and appear, in the general
nonlinear
case, to prevent falling
into the degenerate global minima they are designed to avoid. But
then, what is it they really learn in the classical sense of the word?
Much work remains to be done.

\newpage


\subsection*{Ethics statement}
This work explores the theoretical foundations of two widely used
methods in self-supervised learning, the stop-gradient and exponential
moving average algorithms. Self-supervised representation learning has
already had a significant impact in various applications, including
natural language processing, computer vision, and robotics. We do not
anticipate particular risks of this work, but a deeper understanding of
self-supervised learning may allow the development of more performant and
scalable algorithms, with potential increased impact on the
aforementioned fields, which are not exempt of risks of misuse.

\subsection*{Reproducibility statement}
We have included in the main text all the necessary details to
understand and reproduce our experimental results, both for the
real-world data experiments, in Section~\ref{sec:realexp}, and for the
synthetic data in Section~\ref{sec:synthexp}.  For the real-world data
experiments, we use the publicly available code of
V-JEPA~\citep{Bardes24}, which we slightly modify as explained in
Section~\ref{sec:realexp}.

\subsection*{Acknowledgments}
This work was supported in part by the French government under management of Agence Nationale de la Recherche as part of the “France 2030” program, PR[AI]RIE-PSAI projet, reference ANR23-IACL-0008. JP was supported in part by the Louis Vuitton/ENS chair in artificial intelligence, the Institute of Information \& Communications Technology Planning \& Evaluation (IITP) grant funded by the Korean Government (MSIT) (No. RS-2024-00457882, National AI Research Lab Project), and a Global Distinguished Professorship at the Courant Institute of Mathematical Sciences and the Center for Data Science at New York University.

\clearpage

\bibliographystyle{plainnat}
\bibliography{kriegs}

\ifx\URL\undefined \def\URLset#1{{\tt #1}\catcode`\~=\active\catcode`\_=8} \def\URL{\catcode`\~=12 \catcode`\_=12 \URLset} \fi
\begin{thebibliography}{37}
\providecommand{\natexlab}[1]{#1}
\providecommand{\url}[1]{\texttt{#1}}
\expandafter\ifx\csname urlstyle\endcsname\relax
  \providecommand{\doi}[1]{doi: #1}\else
  \providecommand{\doi}{doi: \begingroup \urlstyle{rm}\Url}\fi

\bibitem[Arnol'd(1992)]{Arnoldiff92}
V.I. Arnol'd.
\newblock \emph{Ordinary differential equations}.
\newblock Springer, 1992.

\bibitem[Assran et~al.(2023)Assran, Duval, Misra, Bojanowski, Vincent, Rabbat, LeCun, and Ballas]{I-JEPA}
M.~Assran, Q.~Duval, I.~Misra, P.~Bojanowski, P.~Vincent, M.G. Rabbat, Y.~LeCun, and N.~Ballas.
\newblock Self-supervised learning from images with a joint-embedding predictive architecture.
\newblock In \emph{CVPR}, 2023.

\bibitem[Baldi and Hornik(1989)]{baldi1989neural}
Pierre Baldi and Kurt Hornik.
\newblock Neural networks and principal component analysis: Learning from examples without local minima.
\newblock \emph{Neural networks}, 2\penalty0 (1):\penalty0 53--58, 1989.

\bibitem[Balestriero and LeCun(2022)]{balestriero2022}
R.~Balestriero and Y.~LeCun.
\newblock Contrastive and non-contrastive self-supervised learning recover global and local spectral embedding methods.
\newblock In \emph{NeurIPS}, 2022.

\bibitem[Bardes et~al.(2022)Bardes, Ponce, and LeCun]{bardes_vicreg}
A.~Bardes, J.~Ponce, and Y.~LeCun.
\newblock {VICR}eg: Variance-invariance-covariance regularization for self-supervised learning.
\newblock In \emph{ICLR}, 2022.

\bibitem[Bardes et~al.(2024)Bardes, Garrido, Ponce, Chen, Rabbat, LeCun, Assran, and Ballas]{Bardes24}
A.~Bardes, Q.~Garrido, J.~Ponce, X.~Chen, M.~Rabbat, Y.~LeCun, M.~Assran, and N.~Ballas.
\newblock Revisiting feature prediction for learning visual representations from video.
\newblock \emph{Transactions on Machine Learning}, 2024.

\bibitem[Bromley et~al.(1994)Bromley, Guyon, LeCun, S\"ackinger, and Shah]{Siamese94}
J.~Bromley, I.~Guyon, Y.~LeCun, E.~S\"ackinger, and R.~Shah.
\newblock Signature verification using a "{S}iamese" time delay neural network.
\newblock In \emph{Proc. NIPS}, 1994.

\bibitem[Caron et~al.(2021)Caron, Touvron, Mishra, J\'egou, Mairal, Bojanowski, and Joulin]{DINO}
M.~Caron, H.~Touvron, I.~Mishra, H.~J\'egou, J.~Mairal, P.~Bojanowski, and A.~Joulin.
\newblock Emerging properties in self-supervised vision transformers.
\newblock In \emph{ICCV}, 2021.

\bibitem[Chen et~al.(2020)Chen, Kornblith, Norouzi, and Hinton]{SimCLR}
T.~Chen, S.~Kornblith, M.~Norouzi, and G.~Hinton.
\newblock A simple framework for contrastive learning of visual representations.
\newblock In \emph{ICML}, 2020.

\bibitem[Chen and He(2021)]{SimSiam21}
X.~Chen and K.~He.
\newblock Exploring simple siamese representation learning.
\newblock In \emph{Proc. CVPR}, 2021.

\bibitem[Dosovitskiy et~al.(2021)Dosovitskiy, Beyer, Kolesnikov, Weissenborn, Zhai, Unterthiner, Dehghani, Minderer, Heigold, Gelly, Uszkoreit, and Houlsby]{VITs}
A.~Dosovitskiy, L.~Beyer, A.~Kolesnikov, D.~Weissenborn, X.~Zhai, T.~Unterthiner, M.~Dehghani, M.~Minderer, G.~Heigold, S.~Gelly, J.~Uszkoreit, and N.~Houlsby.
\newblock An image is worth 16x16 words: Transformers for image recognition at scale.
\newblock In \emph{ICLR}, 2021.

\bibitem[Goyal et~al.(2017)Goyal, Kahou, Michalski, Materzyńska, Westphal, Kim, Haenel, Fruend, Yianilos, Mueller-Freitag, Hoppe, Thurau, Bax, and Memisevic]{SSv2}
R.~Goyal, S.E. Kahou, V.~Michalski, J.~Materzyńska, S.~Westphal, H.~Kim, V.~Haenel, I.~Fruend, P.~Yianilos, M.~Mueller-Freitag, F.~Hoppe, C.~Thurau, I.~Bax, and R.~Memisevic.
\newblock The "something something" video database for learning and evaluating visual common sense.
\newblock In \emph{ICCV}, 2017.

\bibitem[Grill et~al.(2020)Grill, Strub, Altch\'e, Tallec, Richemond, Buchatskaya, Doersch, Pires, guo, Azar, Piot, Kavukcuoglu, Munos, and Valko]{BYOL20}
J.-B. Grill, F.~Strub, F.~Altch\'e, C.~Tallec, P.H. Richemond, E.~Buchatskaya, C.~Doersch, B.A. Pires, Z.D. guo, M.G. Azar, B.~Piot, K.~Kavukcuoglu, R.~Munos, and M.~Valko.
\newblock Bootstrap your own latent - a new approach to self-supervised learning.
\newblock In \emph{Proc. NeurIPS}, 2020.

\bibitem[Halvagal et~al.(2023)Halvagal, Laborieux, and Zenke]{halvagal2023implicit}
Manu~Srinath Halvagal, Axel Laborieux, and Friedemann Zenke.
\newblock Implicit variance regularization in non-contrastive {SSL}.
\newblock In \emph{Thirty-seventh Conference on Neural Information Processing Systems}, 2023.
\newblock URL \url{https://openreview.net/forum?id=zMNUNd9zs1}.

\bibitem[He et~al.(2020)He, Fan, Wu, Xie, and Girschick]{Moco20}
K.~He, H.~Fan, Y.~Wu, S.~Xie, and R.~Girschick.
\newblock Momentum contrast for unsupervised visual representation learning.
\newblock In \emph{Proc. CVPR}, 2020.

\bibitem[Hirsch et~al.(2013)Hirsch, Smale, and Devaney]{hirsch2013differential}
Morris~W Hirsch, Stephen Smale, and Robert~L Devaney.
\newblock \emph{Differential equations, dynamical systems, and an introduction to chaos}.
\newblock Academic press, 2013.

\bibitem[Jing et~al.(2022)Jing, Vincent, LeCun, and Tian]{jing2022understanding}
Li~Jing, Pascal Vincent, Yann LeCun, and Yuandong Tian.
\newblock Understanding dimensional collapse in contrastive self-supervised learning.
\newblock In \emph{International Conference on Learning Representations}, 2022.
\newblock URL \url{https://openreview.net/forum?id=YevsQ05DEN7}.

\bibitem[LeCun(2022)]{PathAMI}
Y.~LeCun.
\newblock A path towards autonomous machine intelligence.
\newblock \emph{Open Review}, Jun 2022.

\bibitem[Littwin et~al.(2024)Littwin, Saremi, Advani, Thilak, Nakkiran, Huang, and Susskind]{Littwin24}
E.~Littwin, O.~Saremi, M.~Advani, V.~Thilak, P.~Nakkiran, C.~Huang, and J.M. Susskind.
\newblock How {JEPA} avoids noisy features: The implicit bias of deep linear self distillation networks.
\newblock In \emph{NeurIPS}, 2024.

\bibitem[Liu et~al.(2022)Liu, Suganuma, and Okatani]{liu2022bridging}
Kang-Jun Liu, Masanori Suganuma, and Takayuki Okatani.
\newblock Bridging the gap from asymmetry tricks to decorrelation principles in non-contrastive self-supervised learning.
\newblock In Alice~H. Oh, Alekh Agarwal, Danielle Belgrave, and Kyunghyun Cho, editors, \emph{Advances in Neural Information Processing Systems}, 2022.
\newblock URL \url{https://openreview.net/forum?id=Jz98aDK5gMW}.

\bibitem[Miech et~al.(2019)Miech, Zhukov, Alayrac, Tapaswi, Laptev, and Sivic]{HowTo100M}
A.~Miech, D.~Zhukov, J.-B. Alayrac, M.~Tapaswi, I.~Laptev, and J.~Sivic.
\newblock Howto100m: Learning a text-video embedding by watching hundred million narrated video clips.
\newblock In \emph{ICCV}, 2019.

\bibitem[Mikolov et~al.(2013)Mikolov, Sutskever, Chen, Corrado, and Dean]{Mikolov13}
T.~Mikolov, I.~Sutskever, K.~Chen, G.S. Corrado, and J.~Dean.
\newblock Distributed representations of words and phrases and their compositionality.
\newblock In \emph{NIPS}, 2013.

\bibitem[Oquab et~al.(2024)Oquab, Darcet, Moutakanni, Vo, Szafraniec, Khalidov, Fernandez, Haziza, Massa, El-Nouby, Assran, Ballas, Galuba, R.Howes, Huang, Li, Misra, Rabbat, Sharma, Synnaeve, Xu, J\'egou, Mairal, Labatut, Joulin, and Bojanowski]{DINOV2}
M.~Oquab, T.~Darcet, T.~Moutakanni, H.V. Vo, M.~Szafraniec, V.~Khalidov, P.~Fernandez, D.~Haziza, F.~Massa, A.~El-Nouby, M.~Assran, N.~Ballas, W.~Galuba, R.Howes, P.-Y. Huang, S.-W. Li, I.~Misra, M.~Rabbat, V.~Sharma, G>~Synnaeve, H.~Xu, H.~J\'egou, J.~Mairal, P.~Labatut, A.~Joulin, and P.~Bojanowski.
\newblock {DINO}v2: Learning robust visual features without supervision.
\newblock \emph{Transactions on Machine Learning Research}, 2024.
\newblock ISSN 2835-8856.

\bibitem[Petersen and Pedersen(2012)]{matcook12}
K.~B. Petersen and M.~S. Pedersen.
\newblock The matrix cookbook, 2012.

\bibitem[Radford et~al.(2021)Radford, Kim, Hallacy, Ramesh, Goh, Agarwal, Sastry, Askell, Mishkin, Clark, Krueger, and Sutskever]{CLIP}
A.~Radford, J.W. Kim, C.~Hallacy, A.~Ramesh, G.~Goh, S.~Agarwal, G.~Sastry, A.~Askell, P.~Mishkin, J.~Clark, G.~Krueger, and I.~Sutskever.
\newblock Learning transferable visual models from natural language supervision.
\newblock In \emph{ICML}, 2021.

\bibitem[Sansone et~al.(2025)Sansone, Lebailly, and Tuytelaars]{sansone2025collapseproof}
Emanuele Sansone, Tim Lebailly, and Tinne Tuytelaars.
\newblock Collapse-proof non-contrastive self-supervised learning.
\newblock In \emph{Forty-second International Conference on Machine Learning}, 2025.
\newblock URL \url{https://openreview.net/forum?id=wIfl8PK6Op}.

\bibitem[Smaira et~al.(2020)Smaira, Carreira, Noland, Clancy, Wu, and Zisserman]{K700-2020}
L.~Smaira, J.~Carreira, E.~Noland, E.~Clancy, A.~Wu, and A.~Zisserman.
\newblock A short note on the kinetics-700-2020 human action dataset, 2020.

\bibitem[Tang et~al.(2023)Tang, Guo, Richemond, Pires, Chandak, Munos, Rowland, Azar, Lan, Lyle, Gy\"{o}rgy, Thakoor, Dabney, Piot, Calandriello, and Valko]{self-predictivelearning2023}
Yunhao Tang, Zhaohan~Daniel Guo, Pierre~Harvey Richemond, Bernardo~\'{A}vila Pires, Yash Chandak, R\'{e}mi Munos, Mark Rowland, Mohammad~Gheshlaghi Azar, Charline~Le Lan, Clare Lyle, Andr\'{a}s Gy\"{o}rgy, Shantanu Thakoor, Will Dabney, Bilal Piot, Daniele Calandriello, and Michal Valko.
\newblock Understanding self-predictive learning for reinforcement learning.
\newblock In \emph{Proceedings of the 40th International Conference on Machine Learning}, ICML'23. JMLR.org, 2023.

\bibitem[Tao et~al.(2021)Tao, Wang, Zhu, Dong, Song, Huang, and Dai]{Tao2021UniGrad}
Chenxin Tao, Honghui Wang, Xizhou Zhu, Jiahua Dong, Shiji Song, Gao Huang, and Jifeng Dai.
\newblock Exploring the equivalence of siamese self-supervised learning via a unified gradient framework.
\newblock \emph{2022 IEEE/CVF Conference on Computer Vision and Pattern Recognition (CVPR)}, pages 14411--14420, 2021.
\newblock URL \url{https://api.semanticscholar.org/CorpusID:245006078}.

\bibitem[Tian et~al.(2021)Tian, Chen, and Ganguli]{Tian21}
Y.~Tian, X.~Chen, and S.~Ganguli.
\newblock Understanding self-supervised learning dynamics without contrastive pairs.
\newblock In \emph{Proc. ICML}, 2021.

\bibitem[Vaswani et~al.(2017)Vaswani, Shazeer, Parmar, Uzskoreit, Jones, GOmez, and Kaiser]{Vaswani17}
A.~Vaswani, N.~Shazeer, N.~Parmar, J.~Uzskoreit, L.~Jones, A.N. GOmez, and L.~Kaiser.
\newblock Attention is all you need.
\newblock In \emph{Proc. NeurIPS}, 2017.

\bibitem[Wang et~al.(2021)Wang, Chen, Du, and Tian]{Wang21}
X.~Wang, X.~Chen, S.S. Du, and Y.~Tian.
\newblock Towards demystifying representation learning with non-contrastive self-supervision, 2021.

\bibitem[Wen and Li(2022)]{prediction_head_ncssl}
Zixin Wen and Yuanzhi Li.
\newblock The mechanism of prediction head in non-contrastive self-supervised learning.
\newblock In \emph{NeurIPS}, 2022.

\bibitem[Weng et~al.(2022)Weng, Huang, Zhao, Anwer, Khan, and Khan]{whiteningloss2022}
Xi~Weng, Lei Huang, Lei Zhao, Rao~Muhammad Anwer, Salman Khan, and Fahad~Shahbaz Khan.
\newblock An investigation into whitening loss for self-supervised learning.
\newblock In \emph{Proceedings of the 36th International Conference on Neural Information Processing Systems}, NIPS '22, Red Hook, NY, USA, 2022. Curran Associates Inc.
\newblock ISBN 9781713871088.

\bibitem[Zbontar et~al.(2021)Zbontar, Jing, Misra, LeCun, and Deny]{Barlow_Twins}
J.~Zbontar, L.~Jing, I.~Misra, Y.~LeCun, and S.~Deny.
\newblock Barlow twins: Self-supervised learning via redundancy reduction.
\newblock In \emph{ICML}, 2021.

\bibitem[Zhang et~al.(2022)Zhang, Zhang, Zhang, Pham, Yoo, and Kweon]{zhang2022howSimSiam}
Chaoning Zhang, Kang Zhang, Chenshuang Zhang, Trung~X. Pham, Chang~D. Yoo, and In~So Kweon.
\newblock How does simsiam avoid collapse without negative samples? a unified understanding with self-supervised contrastive learning.
\newblock In \emph{International Conference on Learning Representations}, 2022.
\newblock URL \url{https://openreview.net/forum?id=bwq6O4Cwdl}.

\bibitem[Ziyin et~al.(2023)Ziyin, Lubana, Ueda, and Tanaka]{ziyin2023whatshapeslossSSL}
Liu Ziyin, Ekdeep~Singh Lubana, Masahito Ueda, and Hidenori Tanaka.
\newblock What shapes the loss landscape of self supervised learning?
\newblock In \emph{The Eleventh International Conference on Learning Representations}, 2023.
\newblock URL \url{https://openreview.net/forum?id=3zSn48RUO8M}.

\end{thebibliography}

\clearpage

\section{Proofs}

We will use, when convenient, the notation $f(\theta, x)$ for $f_{\theta}(x)$ and $g(\psi, z)$ for $
g_{\psi}(z)$. For a given matrix $A$ we denote by $vec(A)$ the vector collecting all the components of the matrix $A$.

\subsection*{Proof of Proposition~\ref{prop:first}}
As noted before, $\bar{E}(\theta,\psi)=\bar{F}(\theta,\psi,\theta)$ for any
values of $\theta$ and $\psi$. However, there is no a priori reason
for a limit point of  SG (if and when it
converges) to be a critical point of $\bar{E}$. Let us write
\begin{equation}
  F(\theta,\psi,\xi,x,y)=l[u(\theta,\psi,x),v(\xi,y)]+\Omega(\theta,\psi),
\end{equation}
where $ u(\theta,\psi,x)=g_\psi\circ f_\theta(x)$ and $v(\xi,y)=f_\xi(y)$,
and denote by $J_\theta u(\theta,\psi,x)$ and
$J_\psi u(\theta,\psi,x)$ the $n\times p$ and $n\times q$ Jacobians of
$u(\theta,\psi,x)=g_\psi\circ f_\theta(x)$ with respect to $\theta$
and $\psi$.
We also introduce $l(\theta,\psi,\xi,x,y)=l[u(\theta,\psi,x),v(\xi,y)]$.
We have
\begin{equation}
  \left\{
\begin{array}{l}
           \nabla_\theta F=J_\theta
           u(\theta,\psi,x)^T
           \nabla_ul(\theta,\psi,\theta,x,y)+
\nabla_\theta\Omega(\theta,\psi),\\
           \nabla_\psi F=J_\psi u(\theta,\psi,x)^T\nabla_ul(\theta,\psi,\theta,x,y)+
\nabla_\psi\Omega(\theta,\psi),
         \end{array}\right.
       \label{eq:grads}
\end{equation}
and the means of these two gradients (also  gradients of
$\bar{F}$) should vanish at any limit point of  SG. But
\begin{equation}
  \left\{
\begin{array}{l}
  \nabla_\theta \bar{E}(\theta,\psi)=\nabla_\theta \bar{F}(\theta,\psi,\theta)
  +\EE_{x,y} [J_\theta f(\theta,y)^T\nabla_v l[u(\theta,\psi,x),v(\theta,x,y)]],\\
    \nabla_\psi \bar{E}(\theta,\psi)=    \nabla_\psi
           \bar{F}(\theta,\psi,\theta),
\end{array}
\right.
\end{equation}
where $J_\theta f(\theta,y)=J_\xi v(\theta,y)$ is the $n\times p$
Jacobian of $f$ with respect to $\theta$. There is a priori no reason
why the second term of the gradient of $\bar{E}$ with respect to
$\theta$, which depends on the data, should be zero at a critical
point of $\bar{F}$ and thus at a limit point of SG (and thus of EMA)
if such a point exists.

To establish this, we consider a counter example in  the linear case where $f_{\theta}(x) = Ax$ and $g_{\psi}=Bz$ where $A$ is an $n\times m$ matrix and $B$ an $n\times n$ matrix with $n>m$. Here $\theta= vec(A)$ and $\xi=vec(B)$. Anticipating the result of Proposition~\ref{prop:genericity_full_rank}, we know that we can always find an arbitrarily small perturbation to the data distribution, call it $\mathbb{P}_{\epsilon}$ so that there exists a matrix $A^{\star}$ of maximal rank and a matrix $B^{\star}$, that are critical points of the dynamics. Set $\theta^{\star} = vec(A^{\star})$ and $\psi^{\star}= vec(B^{\star})$. Hence, $f_{\theta^{\star}}(x)$ and $g_{\psi^{\star}}(z)$ cannot be degenerate solutions of the EMA or SG dynamics.
Now, we wish to further establish that $\nabla_{\theta} \bar{E}_{\mathbb{P}_{\epsilon}}(\theta^{\star},\psi^{\star})$ is not $0$ in a generic sense.
Let us first express $\nabla_{\theta}\bar{E}_{\mathbb{P}_{\epsilon}}(\theta,\psi) := \mathbb{E}_{(\tilde{x},\tilde{y})\sim \mathbb{P}_{\epsilon}}[\nabla_{\theta} E(\theta,\psi, \tilde{x},\tilde{y})]$  in terms of the matrices $A^{\star}$ and $B^{\star}$ and $\nabla \bar{F}_{\mathbb{P}_{\epsilon}}(\theta^{\star},\psi^{\star},\theta^{\star}) := \mathbb{E}_{(\tilde{x},\tilde{y})\sim \mathbb{P}_{\epsilon}}[\nabla_{\theta} F(\theta,\psi,\theta, \tilde{x},\tilde{y})]$:
\begin{align*}
\nabla_{\theta}\bar{E}_{\mathbb{P}_{\epsilon}}(\theta^{\star},\psi^{\star})  &= \nabla \bar{F}_{\mathbb{P}_{\epsilon}}(\theta^{\star},\psi^{\star},\theta^{\star}) +   A^{\star}[\tilde{y}\tilde{y}^{\top}]- B^{\star}A^{\star}[\tilde{x}\tilde{y}^{\top}]\\
    &= A^{\star}[\tilde{y}\tilde{y}^{\top}]- B^{\star}A^{\star}[\tilde{x}\tilde{y}^{\top}].
\end{align*}
Note that $\nabla_{\theta} \bar{F}_{\mathbb{P}_{\epsilon}}(\theta,\psi,\theta)$ and  $\nabla_{\psi} \bar{F}_{\mathbb{P}_{\epsilon}}(\theta,\psi,\theta)$ are independent of the matrix $[\tilde{y}\tilde{y}^{\top}]$ by virtue of the expression:
\begin{align*}
    \nabla_{\theta} \bar{F}_{\mathbb{P}_{\epsilon}}(\theta,\psi,\theta) &= B^{\top}(BA [\tilde{x}\tilde{x}^{\top}]- A[\tilde{y}\tilde{x}^{\top}]) + \lambda A\\
    \nabla_{\psi} \bar{F}_{\mathbb{P}_{\epsilon}}(\theta,\psi,\theta)&= BA[\tilde{x}\tilde{x}^{\top}]A^{\top}- A[\tilde{y}\tilde{x}^{\top}]A^{\top} + \lambda B.
\end{align*}
Therefore, the critical points $(A^{\star},B^{\star})$ are independent of the matrix $[\tilde{y}\tilde{y}^{\top}]$. We treat two cases: either $\nabla_{\theta}\bar{E}_{\mathbb{P}_{\epsilon}}(\theta^{\star},\psi^{\star})\neq 0$, and there is nothing to prove, or  $\nabla_{\theta}\bar{E}_{\mathbb{P}_{\epsilon}}(\theta^{\star},\psi^{\star})=0$. In this case, we will construct a second perturbed data distribution $\mathbb{Q}_{\epsilon}$ from $\mathbb{P}_{\epsilon}$ as follows:
\begin{align}
    x' = \tilde{x}\qquad y' = \tilde{y} + \epsilon^{\frac{1}{2}}z,
\end{align}
where $(\tilde{x},\tilde{y})$ are samples from $\mathbb{P}_{\epsilon}$ and $z$ is a standard centered gaussian independent of $\tilde{x}$ and $\tilde{y}$. It is easy to check that $[x'(x')^{\top} ] = [\tilde{x}\tilde{x}^{\top} ]$  and that $[y'(x')^{\top} ] = [\tilde{y}\tilde{x}^{\top} ]$ so that the critical points $A^{\star}$ and $B^{\star}$ remain unchanged. On the other hand $[y'(y')^{\top}] = [\tilde{y}\tilde{y}^{\top}] + \epsilon \text{Id}$. It follows, under such perturbed distribution $\mathbb{Q}_{\epsilon}$ that:
\begin{align*}
\nabla_{\theta}\bar{E}_{\mathbb{Q}_{\epsilon}}(\theta^{\star},\psi^{\star})
&=  \nabla \bar{F}_{\mathbb{Q}_{\epsilon}}(\theta^{\star},\psi^{\star},\theta^{\star}) + A^{\star}[y'(y')^{\top}] - B^{\star}A^{\star}[[x'(y')^{\top}]\\
&= A^{\star}[\tilde{y}\tilde{y}^{\top}] - B^{\star}A^{\star}[\tilde{x}\tilde{y}^{\top}] + \epsilon A^{\star}\\
&= \epsilon A^{\star}\neq 0.
\end{align*}
Hence, we have established that one can always find a small perturbation $\mathbb{Q}_{\epsilon}$ to the data distribution so that the equilibrium is not degenerate and does not correspond to a critical point of the objective $\mathbb{E}_{(x',y')\sim \mathbb{Q}_{\epsilon}}[E(\theta,\psi,x',y')]$.
\hfill\qed

\subsection*{Proof of Proposition~\ref{prop:noptim}}
Define for simplicity, the following vector fields:
\begin{align}
P(\theta,\psi,x,y)&= \nabla_{\theta}F(\theta,\psi, \theta,x,y)\\
Q(\theta,\psi,x,y)&= \nabla_{\psi}F(\theta,\psi, \theta,x,y).
\end{align}
Let $\mathbb{D}$ be some probability distribution over data $(x,y)$ and define $\bar{P}_{\mathbb{D}}(\theta,\psi):= \mathbb{E}_{(x,y)\sim \mathbb{D}}\left[ P(\theta,\psi,x,y) \right]$ and $\bar{Q}_{\mathbb{D}}(\theta,\psi):= \mathbb{E}_{(x,y)\sim \mathbb{D}}\left[ Q(\theta,\psi,x,y) \right]$.
  According to Schwarz's integrability theorem, a {\em necessary}
  condition for $\bar{P}_{\mathbb{D}}$ and $\bar{Q}_{\mathbb{D}}$ to be the gradient field of a
  smooth scalar function is that their second-order cross derivatives
  be the transposes of each other.
We have
\begin{equation}
  \left\{
 \begin{array}{l}
           \displaystyle\frac{\partial
           P}{\partial\psi}(\theta,\psi,x,y)=H_{\theta,\psi}u(\theta,\psi,x)
           [u(\theta,\psi,x)-v(\theta,y)]
+J_\theta u (\theta,\psi,x)^TJ_\psi u(\theta,\psi,x),\\
           \displaystyle\frac{\partial
           Q}{\partial\theta}(\theta,\psi,x,y)=H_{\psi,\theta}u(\theta,\psi,x)[u(\theta,\psi,x)
           -v(\theta,y)]
+J_\psi u(\theta,\psi,x)^T [J_\theta
           u(\theta,\psi,x)-J_\theta v(\theta,y)],
         \end{array}\right.
     \end{equation}
where $J_\theta v(\theta,y)$ denotes the $n\times p$ Jacobian of $v$ with
respect to $\theta$, which is equal to the Jacobian $J_\theta
f(\theta,y)$, and $H_{\theta,\psi}u$ (resp. $H_{\psi,\theta}u$) is
the $p\times q\times n$ (resp. $q\times p\times n$) tensor
associated with the second partial derivative of $u$ with respect
to $\theta$ and $\psi$ (resp. $\psi$ and $\theta$). The products
of these tensors by the vector $u-v$ yield $p\times q$ and $q\times
p$ matrices that are transposes of each other, and
we  obtain
\begin{equation}
\bar{\Delta}_{{\mathbb{D}}}(\theta,\psi)\eqdef\frac{\partial \bar{P}_{{\mathbb{D}}}}{\partial\psi}(\theta,\psi)  -
\frac{\partial \bar{Q}_{{\mathbb{D}}}}{\partial\theta}(\theta,\psi) ^T
=  \EE_{x,y} [ \underbrace{J_\theta f(\theta,y)^TJ_\psi g(\psi,f(\theta,x))}_{\Delta(\theta,\xi, x,y)}].
  \label{eq:integ}
\end{equation}
We wish to establish that $\bar{\Delta}_{\mathbb{D}}\neq 0 $ in a generic sense over all possible data distributions. This means that even if for some data distribution $\mathbb{D}_0$ we have $\bar{\Delta}_{\mathbb{D}_0}=0$, we can always construct a perturbed data distribution $\mathbb{D}_{\epsilon}$ that gets arbitrarily close to $\mathbb{D}_0$ as $\epsilon$ approaches $0$ and for which $\bar{\Delta}_{\mathbb{D}_{\epsilon}}\neq 0$. To this end, let's consider a case for which $\bar{\Delta}_{\mathbb{D}_0}=0$ for some $\mathbb{D}_0$, otherwise there is nothing to prove.

By Proposition \ref{prop:non_vanishing_delta} stated below, we know that the integrand $\Delta$ appearing in the expression of $\bar{\Delta}_{\mathbb{D}_0}$ is a function that is not identically $0$. Hence, there exists $\theta$, $\xi$, $x_0$ and $y_0$ so that $\Vert\Delta(\theta,\xi, x_0,y_0)\Vert_{F}\geq 2\eta $ for some positive $\eta$. Since both function $f$ and $g$ are assumed to be continuously differentiable, we know that $\Delta$ must be a continuous map.
Hence, there exists a positive radius $r>0$ small enough so that $\Vert\Delta(\theta,\xi, x,y)-\Delta(\theta,\xi, x_0,y_0)\Vert_{F}\leq \eta $ for any $(x,y)$ in a ball  $\mathcal{B}$ centered around $(x_0,y_0)$ of radius $r$. Consider uniform distribution $q$ over such ball. It is then easy to see that $\Vert \mathbb{E}_{(x,y)\sim q}[\Delta(\theta,\xi,x,y)] - \Delta(\theta,\xi,x_0,y_0)\Vert_F\leq \eta$, so that:
\begin{align}
	 \eta \leq \Vert \Delta(\theta,\xi,x_0,y_0)\Vert_{F}  -\Vert \mathbb{E}_{(x,y)\sim q}[\Delta(\theta,\xi,x,y)] - \Delta(\theta,\xi,x_0,y_0)\Vert_F.
\end{align}
Now, define the following perturbed distribution by convex mixture of $\mathbb{D}_0$ and $q$:
\begin{align}
	\mathbb{D}_{\epsilon} = (1-\epsilon)\mathbb{D}_0 + \epsilon q.
\end{align}
We will show that $\Vert \bar{\Delta}_{\mathbb{D}_{\epsilon}}\Vert_{F}\geq \epsilon \eta$. To see this, we first notice that:
\begin{align}
	\bar{\Delta}_{\mathbb{D}_{\epsilon}} = (1-\epsilon)\underbrace{\bar{\Delta}_{\mathbb{D}_{0}}}_{=0} + \epsilon \mathbb{E}_{(x,y)\sim q}[\Delta(\theta,\xi,x,y)].
\end{align}
Hence, by direct application of the triangle inequality, we have that:
\begin{align}
	\epsilon\eta \leq \epsilon \left(\Vert \Delta(\theta,\xi,x_0,y_0)\Vert_{F}  -\Vert \mathbb{E}_{(x,y)\sim q}[\Delta(\theta,\xi,x,y)] - \Delta(\theta,\xi,x_0,y_0)\Vert_F\right)\leq \Vert \bar{\Delta}_{\mathbb{D}_{\epsilon}} \Vert_F.
\end{align}
We have, therefore shown an arbitrarily small perturbation on the data distribution suffices to ensure that  $\bar{\Delta}_{\mathbb{D}_{\epsilon}}\neq 0$, which in turn implies that that a the corresponding vector fields $\bar{P}_{\mathbb{D}_{\epsilon}}$ and $\bar{Q}_{\mathbb{D}_{\epsilon}}$ are not the gradient of any scalar function.
\hfill\qed

The interested reader may wonder how the integrability condition of
Proposition~\ref{prop:noptim} translates in the linear case.  As shown below, a necessary condition for $\bar{P}$ and $\bar{Q}$ to be the
gradient field of a smooth function for linear encoders and predictors
is that $A[xy^T]=0$. This condition depends on the data and is not in
general satisfied by $A$, as expected.

The proof goes as follows: Given any $s\times t$
matrix $U$ with rows $u_i^T$ ($i=1,\ldots,s$) and vector $v$ in
$\RR^t$, let $(Uv)_i=u_i\cdot v$ denote the $i$th entry of $Uv$. We
have $\partial (Uv)_i/\partial u_i=v$ and
$\partial (Uv)_i/\partial u_j=0$ for any $j\neq i$, and it follows
that the two Jacobians $J_\theta(Ay)$ and $J_\psi(BAx)$ are
respectively the $n\times p$ and $n\times q$ matrices
\begin{equation}
\qmatrix{y^T & \ldots & 0^T\\
 \vdots & \ddots & \vdots\\
 0^T & \ldots & y^T}\,\,\text{and}\,\,
\qmatrix{x^TA^T & \ldots & 0^T\\
 \vdots & \ddots & \vdots\\
 0^T & \ldots & x^TA^T}.
\end{equation}
Substituting these values in Eq.~(\ref{eq:integ})  shows
that $\Delta(\theta,\psi)$ is a $p\times q$ block diagonal matrix
whose $n$ $(m\times n)$ diagonal blocks are all equal to
$[yx^T]A^T$. Transposing  $\Delta(\theta,\psi)=0$
concludes the proof.

We end this section with the main proposition used in the proof above.
\begin{proposition}\label{prop:non_vanishing_delta}
	Assume that the parametric encoder and predictor are not identically $0$, i.e. there exists $\theta_0, \psi_0, x_0$ and $z_0$ so that $f(\theta_0,x_0)\neq 0$ and $g(\psi_0,z_0)\neq 0$. Furthermore, assume both $f$ and $g$ have a final linear layer, In other words, they can be expressed in the following form:
	\begin{align}
		f(\theta, x) := A\phi(U,x),\qquad g(\psi,z) := B h(V,z)
	\end{align}
where $A$ and $B$ are $n\times k$ and $n\times d$ matrices for some positive integers $k$ and $d$, while $U$ and $V$ are parameters so that $\theta= vec(A,U)$ and $\psi = vec(B,V)$. Here, $\phi$ and $h$ are differentiable functions with values in $\mathbb{R}^k$ and $\mathbb{R}^{d}$. Furthermore, consider the quantity
\begin{align}
\Delta(\theta,\psi,x,y) := J_{\theta}f(\theta,y)^{\top} J_{\psi} g(\psi, f(\theta,x)),
\end{align}
where $J_{\theta}f$ and $J_{\psi} g$ denote the Jacobians of $f$ and $g$ w.r.t. $\theta$ and $\psi$. Then $\Delta(\theta,\psi,x,y)$ cannot be identically $0$.
\end{proposition}
\begin{proof}
	It suffices to show that the components $\tilde{\Delta}$ of $\Delta$ corresponding to the partial derivatives w.r.t. the linear parameters $A$ and $B$ are not identically $0$. Hence, by contradiction, we will assume that $\tilde{\Delta}$ vanishes everywhere.
	Hence,  $\tilde{\Delta}$ must vanish, when applied to any arbitrary perturbation matrices $\delta A$ and $\delta B$ of $A$ and $B$, so that the following holds:
\begin{align}\label{eq:vanishing_identity}
	\tilde{\Delta}(\theta,\psi, x,y)(\delta A, \delta B) = (\delta A\phi(U, y))^{\top} \delta B h(V, A \phi(U,x)) = 0.
\end{align}
The above identity is obtained by standard calculus. Furthermore, since $f$ and $g$ are non vanishing, there exists parameter values $U_0$ and $V_0$ so that $x\mapsto \phi(U_0,x)$ and $z\mapsto h(V_0,z)$ are not identically $0$. Let $y_0$ be a vector in $\mathbb{R}^m$ so that $\phi(U_0,y_0)\neq 0$. Moreover, fix any arbitrary vector $z\in \mathbb{R}^n$. There must exist a perturbation matrix $\delta A$ so that $z = \delta A \phi(U_0,y_0)$ (simply take  $\delta A = \frac{1}{\Vert \phi(U_0,y_0)\Vert^2} z \phi(U_0,y_0)^{\top}$). Therefore, by  Equation \ref{eq:vanishing_identity} it follows:
\begin{align}
	z^{\top} (\delta B h(V, A \phi(U_0,x))) = 0,
\end{align}
for any $z$ in $\mathbb{R}^n$ and any matrix $\delta B$. This directly implies that $ h(V, A \phi(U_0,x)) = 0$ for any $A$, $x$ and $V$. Furthermore, by choosing $V=V_0$,  $x=y_0$ and $A = \frac{1}{\Vert \phi(U_0,y_0)\Vert^2} z \phi(U_0,y_0)^{\top}$ for any arbitrary $z\in \mathbb{R}^n$, we get that: $h(V_0, z)=0$. This contradicts the fact that $z\mapsto h(V_0,z)$ is not identically $0$. Hence, we have shown that $\tilde{\Delta}$ is not identically $0$ which, a fortiori, implies that $\Delta$ is itself not identically null.
\end{proof}

\subsection*{Proof of Lemma~\ref{lemma:discdyn}}
We can rewrite $F$ in this case as
\begin{equation}
  \hspace{-2mm}\begin{array}{lcl}
    F(A,B,C,x,y)&=&\frac{1}{2}\Vert BAx-Cy\Vert ^2+
                              \frac{\displaystyle\lambda}{\displaystyle
                              2}(\Vert A\Vert _F^2+\Vert B\Vert _F^2),\\
&=&\frac{1}{2}x^TA^TB^TBAx-y^TC^TBAx+\frac{1}{2}y^TC^TCy
+
                              \frac{\displaystyle\lambda}{\displaystyle
                              2}(\Vert A\Vert _F^2
  +\Vert B\Vert _F^2).
  \end{array}
\end{equation}

We know from Eqs.~[70] and [82] in {\em the matrix
  cookbook}~\citep{matcook12} that
\begin{equation}
\left\{  \begin{array}{l}
    \frac{\displaystyle\partial}{\strut\displaystyle\partial
    U}(a^TUb)=ab^T,\\
    \frac{\displaystyle\partial}{\strut\displaystyle\partial
    U}(b^TU^TVUc)=
    V^TUbc^T+VUcb^T.
         \end{array}\right.
       \label{eq:identities}
\end{equation}
It follows that:
\begin{equation}
\left\{\begin{array}{l}
\frac{\displaystyle\partial F}{\displaystyle\partial
           A}(A,B,C,x,y)=B^T(BAx-Cy)x^T +\lambda A,\\
 \frac{\displaystyle\partial F}{\displaystyle\partial B}(A,B,C,x,y)=(BAx-Cy)x^TA^T +\lambda B.
\end{array}\right.
\end{equation}
\hfill\qed

\subsection*{Proof of Lemma~\ref{lemma:aat-btb}}
 Multiplying the first equation in (\ref{eq:lingrad}) by $A^T$ on
   the right and subtracting the  second one multiplied by $B^T$ on the left
   immediately yields $B^TB-AA^T=0$ at a limit point since both
   derivatives $\bar{P}(A,B,C)$ and $\bar{Q}(A,B,C)$ are zero at such a
   point.
\hfill\qed

\subsection*{Proof of Lemma~\ref{lemma:dyn2}}
Let us first write the derivative of $E$ with respect to $A$. We have
\begin{equation}
    \displaystyle\frac{\partial E}{\partial A}=\displaystyle\frac{\partial F}{\partial
                                                  A}+Ayy^T-BAxy^T
= B^T(BAx-Ay)x^T-(BAx-Ay)y^T+\lambda A,
  \label{eq:Eflow}
\end{equation}
and thus
$\dot{A}=-(B^TR(A,B)-S(A,B)+\lambda A)$, where $S(A,B)=BA[xy^T]-A[yy^T]$.
Substituting in the temporal derivative of $1/2\Vert A\Vert _F^2$, we obtain
\begin{equation}
\begin{array}{lcl}
  \frac{d}{dt}[\frac{1}{2}\Vert A\Vert _F^2]&=&\frac{d}{dt}[\frac{1}{2}[\text{tr}(A^TA)]
                                        =-\text{tr}(A^T\dot{A})=\text{tr}(A^TB^TR(A,B)+A^TS(A,B)+
                                        \lambda A^TA)\\
                                    &=&-\text{tr}(\EE_{x,y}[x^TA^TB^TBAx+y^TA^TAy-x^TA^TB^TAy
                                        -y^TA^TBAx]+\lambda A^TA)\\
                                    &=&-(\EE_{x,y}[\Vert BAx-Ay\Vert ^2]+\lambda\Vert A\Vert _F^2)\\
                                    &\le& -\lambda \Vert A\Vert _F^2
\end{array}
\end{equation}
which implies the exponential convergence of $\Vert A\Vert _F^2$ toward zero.
\hfill\qed

\subsection*{Proof of Lemma~\ref{lemma:th1}}
Multiplying on both sides the first equality in Eq.~(\ref{eq:doteqs}) on the
right by $A^T$ and the second one on the left by $B^T$ and taking
the difference  yields
\begin{equation}
           B^T(\dot{B}+\lambda B)=(\dot{A}+\lambda A)A^T\\
 \end{equation}
Adding this equation to its transpose and
 multiplying both sides by $e^{2\lambda t}$ now yields
 \begin{equation}
   e^{2\lambda t}  (\dot{B}^TB+B^T\dot{B}+2\lambda B^TB) = e^{2\lambda t}(A\dot{A}^T+\dot{A}A^T+2\lambda AA^T),
\label{eq:btbder}
\end{equation}
from which we conclude that
\begin{equation}
  \frac{d}{dt}[e^{2\lambda t} B^TB]=\frac{d}{dt}[e^{2\lambda t}AA^T],
\label{eq:expder}
\end{equation}
and obtain $AA^T=B^TB+e^{-2\lambda t}K$, where $K$ is a constant independent of
the data. This implies in particular that $B^TB-AA^T\rightarrow 0$ as
$t\rightarrow+\infty$.
\hfill\qed

\subsection*{Proof of Proposition~\ref{prop:Eqchar}\label{sec:Eqchar}}
We only give there the proof for the SG algorithm since the proof for
the EMA algorithm follows the exact same reasoning with the extra
parameter $C$ known to be equal to $A$ at an equilibrium.

The equilibria $(A,B)$ of the SG algorithm are characterized by the
two equations
\begin{equation}
\left\{\begin{array}{l}
0=B^T(BA[xx^T]-A[yx^T])+\lambda A,\\
0=(BA[xx^T]-A[yx^T])A^T+\lambda B=BW-A[yx^T]A^T=0.\\
       \end{array}
     \right.
\label{eq:vareq}
\end{equation}
The fact that $B^TB=AA^T$ follows immediately from multiplying both
sides of the first condition by $A^T$ on the right and both sides of
the second one by $B^T$ on the left, then taking the difference.
Equation~(\ref{eq:Bchar}) follows immediately from the second
condition, the inverse being well defined when $\lambda>0$ since $W$
is symmetric positive definite in this case. Substituting this value
in the first equality of Eq.~(\ref{eq:vareq}) now yields
\begin{equation}
  W^{-1}A[xy^T]A^T\left(A[yx^T]A^TW^{-1}A[xx^T]-A[yx^T]\right)+\lambda A=0,
\end{equation}
and equilibria of the SG algorithm are exactly the pairs $(A,B)$ where
$A$ satisfies this condition and $B$ is given by
Eq.~(\ref{eq:Bchar}).  Assuming that $A$ has full rank $m$ and
multiplying both sides of this equation on the left by $W$ and on
the right by $A^T$ yields the equivalent condition
\begin{equation}
  A[xy^T]A^T\left(A[yx^T]A^TW^{-1}(W-\lambda\text{Id})-A[yx^T]A^T\right)
  +\lambda WAA^T=0.
\end{equation}
Multiplying both sides of this equation on the right by $W$, we
obtain the condition
\begin{equation}
  A(U^TU-V^TV)A^T=0,
\,\,\text{where}\,\,
  \left\{
    \begin{array}{l}
      U=A^TA[xx^T]+\lambda\text{Id},\\
      V=A[yx^T],
    \end{array}
  \right.
\end{equation}
which is equivalent to $U^TU=V^TV$ since we have assumed that $A$ has
full rank, and thus to Eq.~(\ref{eq:Achar}) as well.
\hfill\qed

Note that \cite{Tian21} give in Appendix D of their paper an
alternative characterization of the equilibria  in the case
  where $\lambda=0$, which essentially corresponds to the condition
of Eq.~(\ref{eq:Bchar}) in this case, {\em without} the characterization of $S=A^TA$
by Eq.~(\ref{eq:Achar}).

\subsection*{Proof of Corollary~\ref{corollary1}}
  The proof is textbook material and included for completeness. When
  $U$ is column orthogonal and $A=U\sqrt{S}_k$, we obviously have
  $A^TA=S_k$. Conversely, when $A^TA=S_k$, let us take
  $U=A\sqrt{S}_k^{-1}$ (the inverse is guaranteed to exist since $S_k$
  is positive definite). We have
  $U^TU=\sqrt{S}_k^{-1}S_k\sqrt{S}_k^{-1}=\text{Id}$.
\hfill\qed

\subsection*{Proof of Proposition~\ref{prop:genericity_full_rank}}
Fix $\epsilon>0$. There exist $0<\delta\leq \epsilon$ positive so that $T := [yx^{\top}] + \delta I$ and $ R:= [xx^{\top}] + \delta I$ are both invertible. To see this, it suffices to notice that $\delta \mapsto det( [yx^{\top}] + \delta I)$ is a non-zero polynomial, thus vanishes for a finite number of values $\delta$. Therefore, we can always find $\delta < \epsilon$ for which $det( [yx^{\top}] + \delta I) \neq 0$, so that $T$ is invertible. Furthemore, $R$ is necessarily invertible since it is the sum of the PSD matrix $[xx^{\top}]$ and the PD matrix $\delta I$.  The matrices $R$ and $T$ correspond to the covariances of the following perturbed variables:
\begin{align}
	\tilde{x} = x + \delta^{\frac{1}{2}}z,\qquad \tilde{y} = y + \delta^{\frac{1}{2}}z,
\end{align}
where $z$ is a standard isotropic gaussian. Thus we have $R = [\tilde{x}\tilde{x}^{\top}]$ and $T = [\tilde{y}\tilde{x}^{\top}]$. We will show that there exists a positive $\lambda_0$ small enough so that the following equation always admits a PD solution for any $ 0 \leq \lambda\leq \lambda_0$:
\begin{align}
	([\tilde{x}\tilde{x}^{\top}] S + \lambda  Id)( S [\tilde{x}\tilde{x}^{\top}] + \lambda  Id) = [\tilde{y}\tilde{x}^{\top}] S [\tilde{y}\tilde{x}^{\top}].
\end{align}
We will first establish existence of the solution for $\lambda = 0$, then show that the property still holds for $\lambda$ small enough.

\textbf{Case $\lambda =0$.} In this case, the equation simplifies to:
\begin{align}
	RS^2R=T^{\top}ST,
\end{align}
where we used the notation $R$ and $T$ for simplicity. Since, $R$ is invertible, we multiply both sides by $R^{-1}$ (left and right), which yields:
\begin{align}
	S^2 = (TR^{-1})^{\top}S(TR^{-1}).
\end{align}
Since the matrix $TR^{-1}$ is invertible, we can directly apply the technical Lemma \ref{prop:existance_2nd_order_eq}, stated below, which guarantees the existence of a PD solution $S^{\star}$ to the above equation.

\textbf{Case $\lambda>0$.} We will apply the implicit function theorem to show the existence of solutions for $\lambda$ small enough. Consider the matrix valued map $\mathcal{G}(\lambda, S)$ defined as:
\begin{align}
	\mathcal{G}(\lambda, S) = S^2 + \lambda(SR^{-1} + R^{-1}S) + \lambda^2 Id - (TR^{-1})^{\top}S(TR^{-1}).
\end{align}
We have already established that the equation $\mathcal{G}(0, S) =0 $ admits a solution $S^{\star}$. It suffices to prove that the partial differential $d_{S}\mathcal{G}(0, S^{\star})$ at $(0,S^{\star})$ is invertible. Since $d_{S}\mathcal{G}(0, S^{\star})$ is a linear map from the set of $m\times m$ matrices to itself, it suffices to establish its injectivity. Direct calculations show that $H\mapsto d_{S}\mathcal{G}(0, S^{\star})(H)$ is given by:
\begin{align}
	d_{S}\mathcal{G}(0, S^{\star})(H) = S^{\star}H+HS^{\star} - (TR^{-1})^{\top}H(TR^{-1}).
\end{align}
Using again the technical Lemma \ref{prop:existance_2nd_order_eq}, we know that the only solution to the equation $d_{S}\mathcal{G}(0, S^{\star})(H)=0$ is $H=0$. Hence, $d_{S}\mathcal{G}(0, S^{\star})$ is injective. Therefore, by the implicit function theorem, there exists a positive $\lambda_0$ so that for any $0\leq \lambda \leq \lambda_0$, the equation $\mathcal{G}(\lambda, S)=0$ admits a solution $S^{\star}$.

\begin{lemma}\label{prop:existance_2nd_order_eq}
Let $C$ be an invertible  $m\times m$ matrix. There exists a symmetric PD solution $X^{\star}$ to the following equation:
\begin{align*}
	X^2 = C^{\top}XC.
\end{align*}
Moreover, consider the following linear system $X^{\star}H + HX^{\star} - C^{\top}HC=0$ with unknown $H$.  The only solution to such system is $H=0$.
\end{lemma}
\begin{proof}
\textbf{Existence.} We will apply Brouwer's fixed point theorem to a suitable operator. Denote by $\mu$ and $\rho$ its smallest and largest eigenvalues of $C^{\top}C$ which are positive.  Define the following continuous map $\mathcal{G}$ over the set of $\mathbb{S}_{m}^{+}$ of symmetric PSD matrices of size $m\times m$:
	\begin{align*}
		\mathcal{G}(X) = (C^{\top}XC)^{\frac{1}{2}},
	\end{align*}
where the square root denotes the unique PSD square root of a PSD matrix. Note that $\mathcal{G}(X)$ is well defined for any $X\in \mathbb{S}_{m}^{+}$. Consider the following set of matrices:
\begin{align*}
	\mathbb{M} = \{X \in \mathbb{S}_{m}^{+}:  \quad  \mu I \leq X\leq \rho I\}.
\end{align*}
Then $\mathbb{M}$ is a convex compact subset of vector space of $n\times n$ matrices. We will show that $\mathcal{G}(\mathbb{M})\subset \mathbb{M}$, which will allow us to apply Brouwer fixed point theorem. Indeed, for any $X$ in $\mathbb{M}$, simple matrix inequalities yield:
\begin{align*}
	\mu C^{\top}C \leq C^{\top}XC\leq \rho C^{\top}C.
\end{align*}
Recalling that the symmetric square root preserves the matrix order, we directly get:
\begin{align*}
	\mu^{\frac{1}{2}} (C^{\top}C)^{\frac{1}{2}} \leq \mathcal{G}(X)\leq \rho^{\frac{1}{2}} (C^{\top}C)^{\frac{1}{2}}.
\end{align*}
However, by definition of $\rho$ and $\mu$, we have that $\mu I \leq C^{\top}C\leq \rho I$. Consequently, it follows that $\mu I \leq  \mathcal{G}(X)\leq \rho I$. Hence, we have  established that $\mathbb{M}$ is a stable set of the map $\mathcal{G}$. Since the map $\mathcal{G}(X)$ is continuous and $\mathbb{M}$ is a convex compact subset of the space of square matrices, it follows by Brouwer's fixed point theorem that there exists $X^{\star}$ satisfying the equation $\mathcal{G}(X^{\star})= X^{\star}$. After taking the square of such equation, we get that $X^{\star}$ is a solution to $X^2 = C^{\top}XC$.

\textbf{Uniqueness of the solution to the linear system.}  Let  $H$ be an $m\times m$ matrix solution to the linear system:
\begin{align*}
X^{\star}H + HX^{\star} - C^{\top}HC=0.
\end{align*}
We wish to show that $H=0$. We can multiply such equation (left and right) by $(X^{\star})^{-\frac{1}{2}}$ to get:
\begin{align*}
	(X^{\star})^{\frac{1}{2}}H(X^{\star})^{-\frac{1}{2}} + (X^{\star})^{-\frac{1}{2}}H(X^{\star})^{\frac{1}{2}} - (X^{\star})^{-\frac{1}{2}}C^{\top}HC(X^{\star})^{-\frac{1}{2}}=0
\end{align*}
Define $E = (X^{\star})^{-\frac{1}{2}}C^{\top}(X^{\star})^{\frac{1}{2}}$. By direct calculation and using the definition of $X^{\star}$ (i.e. the solution to the equation $X^2 = C^{\top}XC$), we have that $E$ satisfies $EE^{\top}= X^{\star}$. Now, consider the change of variables $\tilde{H} = (X^{\star})^{-\frac{1}{2}}H (X^{\star})^{-\frac{1}{2}}$. We can thus express the above  equation in terms of $\tilde{H}$ and $E$ and $X^{\star}$ as follows:
\begin{align*}
	X^{\star}\tilde{H}  + \tilde{H} X^{\star} - E\tilde{H} E^{\top}= 0.
\end{align*}
Since $E$ is invertible, we can further multiply the equation by $E^{-1}$ on the left and by $E^{-\top}$ on the right and use the identity $EE^{\top}= X^{\star}$ to get:
\begin{align*}
	E^{\top}\tilde{H}E^{-\top} + E^{-1}\tilde{H}E = \tilde{H}.
\end{align*}
The above equation directly implies that $ \tilde{H}$ must be symmetric. Furthermore, by direct calculation, we obtain the following expression for $\Vert \tilde{H}\Vert_F^2$:
\begin{align*}
	\Vert \tilde{H}\Vert_F^2  &= Tr\left((E^{\top}\tilde{H}E^{-\top} + E^{-1}\tilde{H}E)^{\top}(E^{\top}\tilde{H}E^{-\top} + E^{-1}\tilde{H}E)\right)\\
	&= \text{tr}(E^T\tilde{H}^2E^{-\top}) + 2\text{tr}(E^{\top}\tilde{H}E^{-\top} E^{-1}\tilde{H}E  ) + \text{tr}(E^{-1}\tilde{H}^2E)\\
	&= 2\Vert \tilde{H}^2 \Vert_F^2 + 2\Vert E^{-1}\tilde{H}E\Vert_F^2.
\end{align*}
The above identity can only be true if  $\Vert \tilde{H} \Vert_{F}^2 = 0$ which directly implies that $H=0$ since $X^{\star}$ is invertible.

\end{proof}

\subsection*{Proof of Proposition~\ref{prop:dynamics}\label{sec:dynprop}}
 We only present here the proof for the EMA procedure. The SG case is
  similar  and slightly simpler, and thus omitted for conciseness.
Let $(A,B,C)$ be an equilibrium point of our dynamical system and
$(A+D,B+E,C+F)$ a point in its neighborhood, where, like
$A$, $B$ and $C$, the matrices $D$, $E$ and $F$ are respectively
of size $n\times m$, $n\times n$ and $n\times m$.
To first order, we have
\begin{equation}
  \left\{
 \begin{array}{l}
   \dot{A}(A+D,B+E,C+F)
   \approx -B^T((E A+BD)[xx^T]-F[yx^T])-E^TR(A,B,C)
             -\lambda D,\\
   \dot{B}(A+D,B+E,C+F)
   \approx-((E
             A+BD)[xx^T]-F[yx^T])A^T-R(A,B,C)D^T-\lambda E,
   \\
   \dot{C}(A+D,B+E, C+F)=(1-\alpha)(D-F).
 \end{array} \right.
\end{equation}
The eigenvectors of the corresponding linear operator
in $D$, $E$ and $F$ and the corresponding
eigenvalues $\mu$ are thus characterized by
\begin{equation}
\left\{\begin{array}{l}
0=B^T((E A+BD)[xx^T]-F[yx^T])+E^TR(A,B,C)
             +(\lambda+\mu)D,\\
0=         ((E A+BD)[xx^T]-F[yx]^T)A^T+R(A,B,C)D^T+
         (\lambda+\mu)E, \\
         (1-\alpha)D=(1-\alpha+\mu)F.
       \end{array}
     \right.
     \label{eq:dyneq}
\end{equation}

Let us first consider the trivial equilibrium where $A=C=0$ and
$B=0$. Substituting these values in Eq.~(\ref{eq:dyneq}) shows that
for any triplet $(D,E,F)$ satisfying this equation we must
have either $\mu=-\lambda<0$ if $D$ or $E$ is nonzero, or
$\mu=\alpha-1$ if $D$ and $E$ are both zero. But, as noted before,
$\alpha$ is normally taken smaller than or equal to $1$ and we assume
here that $\alpha\neq 1$ (the moving average would not make much sense
otherwise since $\xi_t$ would be constant in that case), so $\mu<0$ in
that case as well. It follows that trivial equilibria are
asymptotically stable

Let us now consider the case of nontrivial equilibria where, in
particular, $A\neq 0$. Note that for any eigenvector triplet $(D,E,F)$
and associated eigenvalue $\mu$ satisfying Eq.~(\ref{eq:dyneq}),
either $1-\alpha+\mu$ is zero, in which case $F$ can take any value
and, as just observed, $\mu<0$, or it is not, in which case
$F=\beta D$ with $\beta=(1-\alpha)/(1-\alpha+\mu)$ so we can focus on
the first two equations.

Multiplying the first one on the right by $A^T$ and the second
one on the left by $B^T$, subtracting the two and using  Eq.~(\ref{eq:vareq}) yields
\begin{equation}
  (\lambda+\mu)M=-\lambda M^T\,\,\text{where}\,\, M=(D A^T-B^TE).
\end{equation}

Now, any matrix $U$ such that $U^T=aU$ for some scalar $a$ also
verifies $U=aU^T$ by taking the transpose on both sides, and thus
$U=aU^T=a(aU)=a^2U$, which means that, either $U=0$ or $U\neq 0$ and
$a^2=1$, with either $a=1$ and
$U$ symmetric or $a=-1$ and $U$ skew-symmetric.
In our case, when $M\neq 0$, it is either symmetric with
$\mu=-2\lambda$, or skew-symmetric with $\mu=0$.
In the first case, any equilibrium is asymptotically stable
according to   Theorem~\ref{th:Arnold} while, in the second one, all eigenvalues vanish
and nothing can be said to first order, which should not happen
generically.

Let us now prove that $\mu <0$  in the slightly more complicated
case $M=0$. Using $B^TE=D
A^T$, multiplying again the first equation in (\ref{eq:dyneq}) on the
right by $A^T$ (or the second one by $B^T$ on the left) and using
Eq.~(\ref{eq:vareq}) now yields
\begin{equation}
  \begin{array}{lcl}
  0&=&B^T((E A+BD)[xx^T]-F[yx^T])A^T+E^T R(A,B,C)A^T
  +(\lambda+\mu)D A^T\\
  &=&(D A^TA+AA^TD)[xx^T]-B^TF[yx^T])A^T-\lambda AD^T
  +(\lambda+\mu)D A^T,\\
    &=&N+P+\mu DA^T,
  \end{array}
  \label{eq:MM0}
\end{equation}
where
\begin{equation}
  \left\{
    \begin{array}{l}
      N=AA^TD[xx^T]A^T-\beta B^TD[yx^T]A^T-\lambda AD^T,\\
      P=DA^TA[xx^T]A^T+\lambda DA^T=DA^T(A[xx^T]A^T+\lambda\text{Id}).
    \end{array}
  \right.
  \label{eq:xxx}
\end{equation}
Here, $N$ is an $n\times n$ matrix of rank at most $m$ with $n>m$. It
is therefore singular with a kernel of dimension $n-m$. So is its
transpose.
Let us pick $u$ in $\text{Ker}(N^T)$ such that $AD^T u\neq
0$. Generically this is always possible since there is no reason for
$\text{Ker}(N^T)$ and $\text{Ker}(AD^T)$ to coincide.

Multiplying the second equation in~(\ref{eq:xxx}) on the left by $u^T$ and on the right
by $v=AD^Tu\neq 0$ now yields
\begin{equation}
  v^T(A[xx^T]A^T+\lambda\text{Id})v+\mu\Vert v\Vert ^2=0,
\end{equation}
and since $A[xx^T]A^T+\lambda\text{Id}$ is positive definite, we conclude
that $\mu<0$.
\hfill\qed

\subsection*{Proof of Proposition~\ref{prop:dynm1}}
  When $m=1$, $S=\Vert a\Vert ^2$ and Eq.~(\ref{eq:Achar}) can be rewritten as
\begin{equation}
(\rho\Vert a\Vert ^2+\lambda)^2=\tau^2\Vert a\Vert ^2,
\end{equation}
or equivalently
\begin{equation}
  \rho\Vert a\Vert ^2 -\varepsilon\tau\Vert a\Vert +\lambda=0\,\,\text{where}\,\,
  \varepsilon=\mp 1.
  \label{eq:quad0}
\end{equation}
A necessary and sufficient condition for real solutions of this
quadratic equation in $\Vert a\Vert $ to exist is that its discriminant $\Delta$
be nonnegative and they are nonnegative when $\varepsilon\tau$ is
itself nonnegative, i.e., $\varepsilon=\text{sign}(\tau)$, so
\begin{equation}
  \rho\Vert a\Vert ^2 -|\tau|\,\Vert a\Vert +\lambda=0.
  \label{eq:quad}
\end{equation}
These solutions indeed correspond to the two hyperspheres $S_1$
and $S_2$ defined above, which concludes the proof of the first part
of the proposition. These correspond exactly to the varieties
$\mathcal{A}_1$ and $\mathcal{A}_2$ associated with the two positive
roots $r_1^2$ and $r_2^2$ of the quadratic equation Eq.~(\ref{eq:Achar}) in $\Vert a\Vert ^2$ of course.
Now, let $a$ be an element of $S_i$ ($i=1,2$). According to
Eqs.~(\ref{eq:Bchar}) and~(\ref{eq:quad}),  we have
\begin{equation}
  B=\tau aa^TW^{-1}=\frac{\tau}{\rho\Vert a\Vert ^2+\lambda}aa^T=
  \frac{1}{r_i}\text{sign}(\tau) aa^T.
\end{equation}
This concludes the first part of the proof of the proposition.

Let us now turn to its second part, assume $\Delta\ge 0$ and consider
an equilibrium with $a$ in $S_i$.  As shown in the proof of
Prop.~\ref{prop:dynamics}, generically, all eigenvalues are negative
unless $M=da^T-B^TE= 0$. Substituting the value of $B$ in this
equation shows that $da^T=(\text{sign}(\tau)/r_i)aa^TE$, and since
eigenvectors are only defined up to scale we can pick $d=a$ and
$E^Ta=\text{sign}(\tau)r_ia$ (note that there exists an infinity of
$n\times n$ matrices $E$ verifying this equality, including
$E=B$). Substituting in Eq.~(\ref{eq:dyneq}) and using
Eqs.~(\ref{eq:quad0}) and~(\ref{eq:quad}) now yields
\begin{equation}
  0=(3\rho r_i^2-2\text{sign}(\tau)r_i+\lambda+\mu)a
  =(\mu-\lambda+\rho r_i^2)a=(\mu -2\lambda+|\tau|\,r_i)a,
\end{equation}
and thus, when $a\neq 0$, $\mu=2\lambda-|\tau|r_i$ (note that with
this value for $\mu$, $d=a$ and any matrix $E$ such that
$E^Ta=\text{sign}(\tau)r_ia$ satisfy Eq.~(\ref{eq:dyneq}) and are thus
indeed the $(d,E)$ part of the corresponding eigenvector). Now, a
$r_i=(|\tau|+\eta_i\sqrt\Delta)/2\rho$ where $\eta_1=-1$ and
$\eta_2=1$, so we have
\begin{equation}
  \mu=2\lambda-\frac{1}{2\rho}|\tau|(|\tau|+\eta_i|\tau|\sqrt\Delta)=\frac{-1}{2\rho}
  (\Delta+\eta_i|\tau|\sqrt\Delta)=\frac{-\Delta}{2\rho}(\sqrt\Delta+\eta_i|\tau|).
\end{equation}
In particular, a necessary and sufficient condition for $\mu$ to be positive is
that $\eta_1=-1$, corresponding to the hypersphere of
radius $r_1$, and that
\begin{equation}
  \tau^2>\Delta=\tau^2-4\rho\lambda,
\end{equation}
which is always true.
\hfill\qed

Proposition~\ref{prop:dynm1} appears to contradict
Proposition~\ref{prop:dynamics}. It does not since the case $m=1$ where
$A=a$ is a vector is non generic: in this case $\rho=[xx^T]$ is a
scalar (and thus commutes with all matrices involved) and
$B=(\varepsilon/\Vert a\Vert )aa^T$, and we can rewrite $N^T$
as
\begin{equation}
N^T =[(\rho - \frac{\varepsilon\tau}{\Vert a\Vert })(d\cdot a)a-\lambda
  d]a^T
  =-\lambda(\frac{1}{\Vert a\Vert ^2}aa^T+\text{Id})d a^T
    =-\lambda(\frac{1}{\Vert a\Vert ^2}aa^T+\text{Id})a a^T.
\end{equation}
In particular, since $((1/\Vert a\Vert ^2)aa^T +\text{Id})$ is positive
definite, $\text{Ker}(N^T)=\text{Ker}(a a^T)=\text{Ker}(a d^T)$, and
we cannot conclude that $\mu$ is negative in the last part of the
proof of Proposition~\ref{prop:dynamics}.  This is generically not the
case for $m>1$, so there is no contradiction.

\end{document}
\typeout{get arXiv to do 4 passes: Label(s) may have changed. Rerun}